%%%%%%%% ICML 2020 EXAMPLE LATEX SUBMISSION FILE %%%%%%%%%%%%%%%%%

\documentclass{article}

% Recommended, but optional, packages for figures and better typesetting:
\usepackage{microtype}
\usepackage{graphicx}
\usepackage{subfigure}
\usepackage{booktabs} % for professional tables
\usepackage{multirow}
% hyperref makes hyperlinks in the resulting PDF.
% If your build breaks (sometimes temporarily if a hyperlink spans a page)
% please comment out the following usepackage line and replace
% \usepackage{icml2021} with \usepackage[nohyperref]{icml2021} above.
\usepackage{hyperref}

% Attempt to make hyperref and algorithmic work together better:

% Use the following line for the initial blind version submitted for review:
\usepackage[accepted]{icmlarxiv}

% If accepted, instead use the following line for the camera-ready submission:
% \usepackage[accepted]{icml2021}
\usepackage{bm}
\usepackage{amsmath}
\usepackage{amssymb}
\usepackage{amsthm}
\newtheorem{theorem}{Theorem}[section]

\newtheorem{property}[theorem]{Property}
\newtheorem{assumption}[theorem]{Assumption}
\newtheorem{corollary}[theorem]{Corollary}
\newtheorem{definition}[theorem]{Definition}
\usepackage{wrapfig}

% The \icmltitle you define below is probably too long as a header.
% Therefore, a short form for the running title is supplied here:
\icmltitlerunning{Hawkes Processes on Graphons}

\begin{document}

\newcommand{\xu}[1]{{\color{red} xu: #1}}
\newcommand{\zha}[1]{{\color{blue} Zha: #1}}

\twocolumn[
\icmltitle{Hawkes Processes on Graphons}

% It is OKAY to include author information, even for blind
% submissions: the style file will automatically remove it for you
% unless you've provided the [accepted] option to the icml2020
% package.

% List of affiliations: The first argument should be a (short)
% identifier you will use later to specify author affiliations
% Academic affiliations should list Department, University, City, Region, Country
% Industry affiliations should list Company, City, Region, Country

% You can specify symbols, otherwise they are numbered in order.
% Ideally, you should not use this facility. Affiliations will be numbered
% in order of appearance and this is the preferred way.
\icmlsetsymbol{equal}{*}

\begin{icmlauthorlist}
\icmlauthor{Hongteng Xu}{ruc,blab}
\icmlauthor{Dixin Luo}{bit}
\icmlauthor{Hongyuan Zha}{cuhk}
\end{icmlauthorlist}

\icmlaffiliation{ruc}{Gaoling School of Artificial Intelligence, Renmin University of China, Beijing, China}
\icmlaffiliation{blab}{Beijing Key Laboratory of Big Data Management and Analysis Methods, Beijing, China}
\icmlaffiliation{bit}{School of Computer Science and Technology, Beijing Institute of Technology, Beijing, China}
\icmlaffiliation{cuhk}{School of Data Science, Shenzhen Research Institute of Big Data, The Chinese University of Hong Kong, Shenzhen, China}

\icmlcorrespondingauthor{Dixin Luo}{dixin.luo@bit.edu.cn}

% You may provide any keywords that you
% find helpful for describing your paper; these are used to populate
% the "keywords" metadata in the PDF but will not be shown in the document
\icmlkeywords{Hawkes process, graphon, hierarchical optimal transport, heterogeneous event sequences}

\vskip 0.3in
]

% this must go after the closing bracket ] following \twocolumn[ ...

% This command actually creates the footnote in the first column
% listing the affiliations and the copyright notice.
% The command takes one argument, which is text to display at the start of the footnote.
% The \icmlEqualContribution command is standard text for equal contribution.
% Remove it (just {}) if you do not need this facility.

\printAffiliationsAndNotice{}  % leave blank if no need to mention equal contribution
%\printAffiliationsAndNotice{\icmlEqualContribution} % otherwise use the standard text.

\begin{abstract}
We propose a novel framework for modeling multiple multivariate point processes, each with heterogeneous event types that share an underlying space and obey the same generative mechanism.
Focusing on Hawkes processes and their variants that are associated with Granger causality graphs, our model leverages an uncountable event type space and samples the graphs with different sizes from a nonparametric model called {\it graphon}. 
Given those graphs, we can generate the corresponding Hawkes processes and simulate event sequences. 
Learning this graphon-based Hawkes process model helps to 1) infer the underlying relations shared by different Hawkes processes; and 2) simulate event sequences with different event types but similar dynamics.
We learn the proposed model by minimizing the hierarchical optimal transport distance between the generated event sequences and the observed ones, leading to a novel reward-augmented maximum likelihood estimation method. 
We analyze the properties of our model in-depth and demonstrate its rationality and effectiveness in both theory and experiments. 
\end{abstract}

\section{Introduction}
As a powerful statistical tool, Hawkes process~\cite{hawkes1971spectra} has been widely used to model event sequences in the continuous-time domain. 
Suppose that we have an event sequence $\{(t_i, v_i)\in [0, T]\times \mathcal{V}\}_{i=1}^{N}$, where $[0, T]$ is the observation time window, $\mathcal{V}$ is the set of event types, and $(t_i, v_i)$ is the $i$-th event at time $t_i$ with type $v_i$. 
Equivalently, we can represent the sequence by a counting process $\bm{N}(t) = \{N_v(t)\}_{v\in\mathcal{V}}$, where $N_v(t)$ is the number of the type-$v$ events till time $t$.
A Hawkes process characterizes the expected instantaneous rate of occurrence of the type-$v$ event at time $t$ by a conditional intensity function~\cite{liniger2009multivariate}: for $v\in\mathcal{V}$ and $t\in[0, T]$, 
\begin{eqnarray}\label{eq:hp}
\begin{aligned}
\lambda_v(t) :=\frac{\mathbb{E}[\text{d}N_v(t)|\mathcal{H}_t]}{\text{d}t}= \mu_v + \sideset{}{_{t_i< t}}\sum \phi_{vv_i}(t,t_i).
\end{aligned}
\end{eqnarray}
Here, $\mathcal{H}_t=\{(t_i,v_i)|t_i<t\}$ contains the past events till time $t$. 
$\mu_v\geq 0$ is the base rate of type-$v$ event. 
$\{\phi_{vv'}(t,t')\geq 0\}_{v,v'\in\mathcal{V},t'<t}$ are the so called \textit{impact functions}, and $\phi_{vv'}(t,t')$ quantifies the influence of the type-$v'$ event at time $t'$ on the type-$v$ event at time $t$. 
Accordingly, $\sum_{t_i<t} \phi_{vv_i}(t,t_i)$ accumulates the impacts of the past events. 
The set of impact functions gives rise to the \textit{Granger causality graph} of the event types~\cite{eichler2017graphical,xu2016learning}, denoted as $G(\mathcal{V},\mathcal{E})$ --- an edge $v'\rightarrow v\in\mathcal{E}$ means that a past type-$v'$ event can trigger the occurrence of a type-$v$ event in the future, and $v'\rightarrow v\notin \mathcal{E}$ if and only if $\phi_{vv'}(t,t')\equiv 0$. 

Hawkes process, together with the corresponding Granger causality graph of event types, has become instrumental for many applications involving event sequences, such as social network modeling~\cite{farajtabar2017coevolve} and financial data analysis~\cite{bacry2015hawkes}. 
Interestingly, even with recent models enhancing Hawkes processes with deep neural networks~\cite{mei2017neural,zhang2020self,zuo2020transformer}, the work in~\cite{tank2018neural} shows that the group sparsity of their neural networks' parameters can still be interpreted by Granger causality of the event types.

\begin{figure}[!t]
    \centering
    \includegraphics[width=0.85\linewidth]{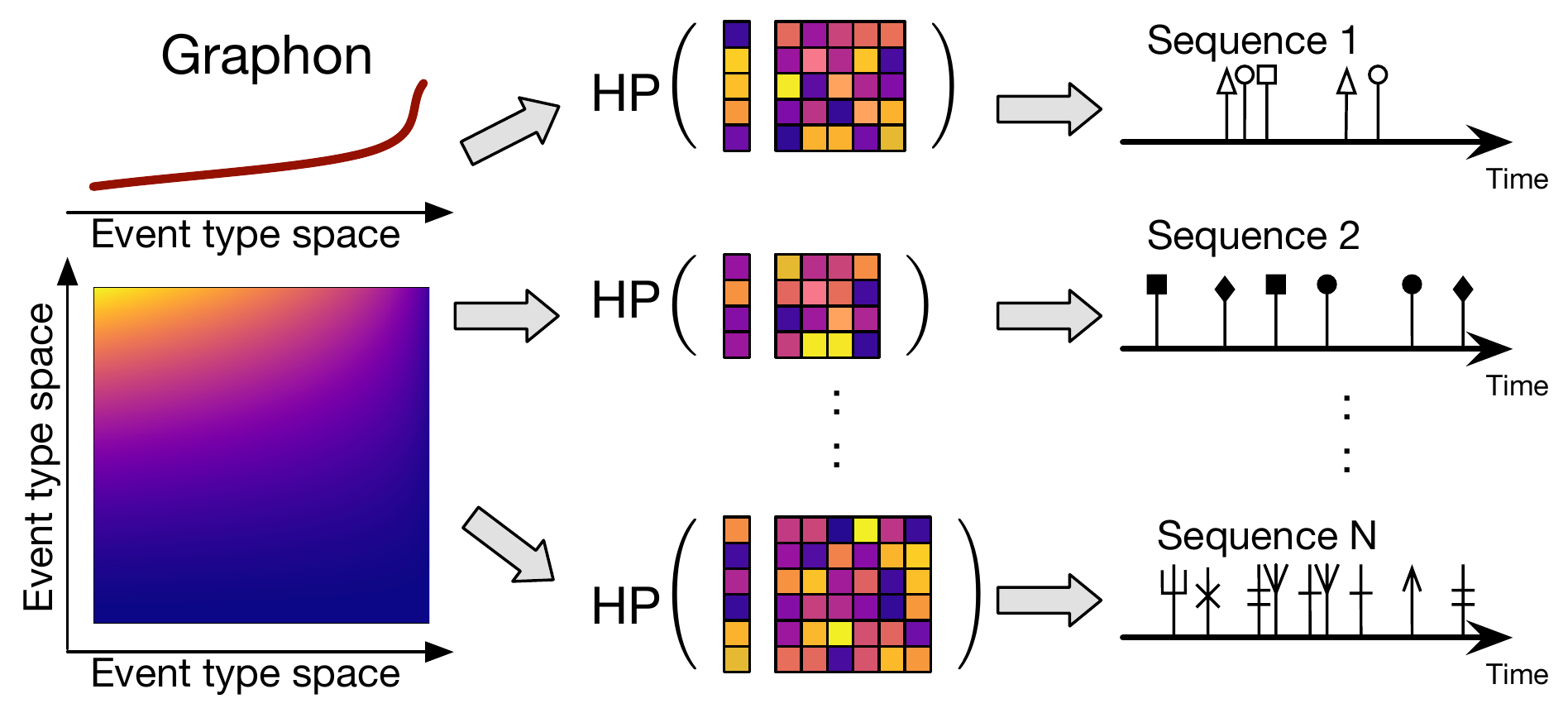}
    \vspace{-4mm}
    \caption{\small{An illustration of the Hawkes processes on a graphon.}}
    \label{fig:ghp}
\end{figure}

Despite achieving many successes, the applications of the Hawkes-related processes are limited for \textit{homogeneous} scenarios in which all the event sequences are generated by one point process defined on a known set of event types.
Although some methods consider learning multiple point processes for the sequences in different clusters~\cite{luo2015multi,xu2017dirichlet} or time periods~\cite{lin2016infinite,alaa2017learning}, they still maintain a single set of event types. 
This setting, however, is in conflict with the \textit{heterogeneous} nature of many real-world event sequences --- the event types are often sampled from an underlying event type space, and new sequences are driven by the latent sampling process and are generated with event types unobserved before.
Accordingly, for different event sequences, their point processes are defined with different event types, and thus, obey different generative mechanisms.
We illustrate this phenomenon via event sequences defined on networks. 

\textbf{Social networks}. 
Users of different networks, $e.g.$, Facebook and Twitter, are actually sampled from the same underlying populations ($i.e.$, all the Internet users in the world). 
When using Hawkes processes to model the user behaviors on those different networks~\cite{blundell2012modelling,zhou2013learning,zhao2015seismic}, the respective users are considered as event types and their corresponding Granger causality graphs can be treated as different subgraphs sampled from a large latent graph for the whole population. 
Additionally, with the entering of new users and the exiting of old ones, those networks are time-varying and their corresponding Hawkes processes at different time are different.

\textbf{Patient admissions}.
For a patient suffering from several diseases, his admissions in a hospital over time are often assumed to be driven by the Granger causality graph of his diseases ($i.e.$, disease graph), and thus, modeled by a Hawkes process~\cite{xu2017learning}.
For patients with different profiles, even for the same patient in different age periods, their disease graphs and the corresponding Hawkes processes can be very different. 
The diseases appearing in each Hawkes process are sampled from the same set of diseases, $e.g.$, the international classification of diseases (ICD), and each specific disease graph is a subgraph of an unknown graph constructed by all the diseases in the set. 
Moreover, with the development of biomedical science, we may find new diseases and observe new admissions in the future. 

Besides these two typical examples, the sequential shopping behaviors on different platforms, the transactions of stocks in different markets, and the diffusion of a virus in different cities, $etc.$, all these event sequences are heterogeneous, whose event types can better be modeled as samples from an underlying infinite even uncountable \textit{event type space}. 
When modeling such event sequences, we need to learn a generative model for their point processes beyond just learning a single point process for each of them individually.

To this end, we propose a new {\it graphon}-based Hawkes process (GHP). 
Essentially, our GHP is a hierarchical generative model for a collection of Hawkes processes with heterogeneous types (and their variants). 
As illustrated in Figure~\ref{fig:ghp}, it not only models the generative mechanisms of event sequences by Hawkes processes but also designs a \textit{graphon} model~\cite{lovasz2012large} to generate the event types of the different Hawkes processes from an uncountable event type space.
By sampling the graphon, we generate the parameters of various Hawkes processes and simulate event sequences accordingly. 
Unlike existing Hawkes-related processes, our GHP model is able to generate different Hawkes processes with heterogeneous event types but similar dynamics.
For more complicated point processes, we can extend our GHP model by leveraging neural networks and applying multi-dimensional graphons.

Our GHP model is theoretically grounded: 
with mild assumptions, we demonstrate that for the generated Hawkes processes, the proposed model $i$) guarantees their stationarity; $ii$) ensures their parameters to be Lipschitz continuous; and $iii$) makes the difference between their corresponding event sequences bounded.
These properties guarantee the stability of our GHP model when generating Hawkes processes and their event sequences.% and reveal that the generated Hawkes processes are quantitatively determined by their event types sampled from the graphon.

Learning GHP from observed heterogeneous event sequences requires us to infer and align the corresponding Hawkes processes with respect to the underlying graphon, for which traditional methods like maximum likelihood estimation are infeasible. 
To overcome this problem, we design a novel learning algorithm based on the reward-augmented maximum likelihood (RAML) estimation~\cite{norouzi2016reward} and the hierarchical optimal transport (HOT) distance~\cite{lee2019hierarchical,yurochkin2019hierarchical}.
In particular, given observed event sequences and those generated by our GHP model, we calculate the HOT distance between them and obtain an optimal transport matrix corresponding to their joint probabilities. 
The probabilities work as the rewards modulating the log-likelihood of each generated event sequence. 
Taking the reward-augment log-likelihood as an objective, we estimate the parameters of GHP  accordingly. 
We verify the feasibility of our GHP model and its learning algorithm on both synthetic and real-world data.
When modeling sparse heterogeneous event sequences that have many event types but small number of events, our GHP model significantly mitigates the risk of over-fitting  and thus outperforms other state-of-the-art point process models. 

\section{Graphon-based Hawkes Processes}\label{sec:model}
\subsection{Generating Hawkes processes from a graphon}\label{ssec:ghp}
For a classic Hawkes process, we often parameterize its impact functions as $\{\phi_{vv'}(t,t')=a_{vv'}\eta(t - t')\}_{v,v'\in\mathcal{V}}$, where the coefficient $a_{vv'}\geq 0$ and the decay kernel $\eta(t)\geq 0$. 
The decay kernel is predefined, and its integral is $D=\int_{0}^{\infty}\eta(t)\text{d}t$. 
Such a Hawkes process is denoted as $\text{HP}_{\mathcal{V}}(\bm{\mu},\bm{A})$, where $\mathcal{V}$ is the set of event types, $\bm{\mu}=[\mu_v]\in \mathbb{R}^{|\mathcal{V}|}$ and $\bm{A}=[a_{vv'}]\in\mathbb{R}^{|\mathcal{V}|\times |\mathcal{V}|}$. 
Here, $|\mathcal{V}|$ is the cardinality of $\mathcal{V}$. 
For $\text{HP}_\mathcal{V}(\bm{\mu},\bm{A})$, $\bm{A}$ is the adjacency matrix of the corresponding Granger causality graph.

A potential way to generate Hawkes processes is to first simulate their Granger causality graphs. 
We apply this strategy based on a nonparametric graph model called \textit{graphon}~\cite{lovasz2012large}. 
A graphon is a two-dimensional measurable function, denoted as $g: \Omega^2\mapsto [0, 1]$, where $\Omega$ is a measure space. 
Given a graphon, we can sample a matrix $\bm{A}=[a_{vv'}]\in [0, 1]^{V\times V}$ with an arbitrary size $V$:
\begin{eqnarray}\label{eq:generate_a}
\begin{aligned}
a_{vv'}=g(x_{v}, x_{v'}),~x_v\sim \text{Uniform}(\Omega)~\text{for}~v=1,..,V.
\end{aligned}
\end{eqnarray}
Here, $\{x_{v}\in\Omega\}_{v=1}^{V}$ are $V$ independent variables sampled from a uniform distribution. 
Accordingly, we generate a graph $G(\mathcal{V},\mathcal{E})$ by setting $\mathcal{V}=\{1,..,V\}$ and $v'\rightarrow v\in\mathcal{E}\sim \text{Bernoulli}(a_{vv'})$. 
This graphon model is fundamental for modeling large-scale networks, which has been widely used in network analysis~\cite{gao2019graphon}. 

Besides $g(x,y)$, we introduce a one-dimensional measurable function on $\Omega$, $i.e.$, $f:\Omega\mapsto [0, +\infty)$, such that we can sample $\bm{\mu}$ and $\bm{A}$ of a Hawkes process from $f(x)$ and $g(x,y)$, respectively. 
Our graphon-based Hawkes process model consists of $f(x)$ and $g(x, y)$, denoted as $\text{GHP}_{\Omega}(f, g)$. 
Here, we set $\Omega=[0, 1]$ and implement the functions as
\begin{eqnarray}\label{eq:functions}
\begin{aligned}
f(x) &= \text{softplus}(f_1)(\exp(\sigma(f_2)x) - 1),\\
g(x,y) &= \sigma\Bigl(\sideset{}{_{i,j\in\{0,..,S\}}}\sum (g_{ij}^1\sin i\pi x + g_{ij}^2\cos i\pi x)\\
&\hspace{2.2cm}\times (g_{ij}^3\sin j\pi y + g_{ij}^4\cos j\pi y) \Bigr),
\end{aligned}
\end{eqnarray}
where $f(x)$ is an exponential function, $g(x,y)$ is designed based on the 2D Fourier series, which has $4(S+1)^2$ coefficients, and $\sigma(\cdot)$ is the sigmoid function. 
This implementation is simple and makes our model satisfy some significant properties in theory, which will be shown in Section~\ref{ssec:stability}.
Then the generative process defined by $\text{GHP}_{\Omega}(f, g)$ is
\begin{eqnarray}\label{eq:ghp}
\begin{aligned}
&\text{HP}_\mathcal{V}(\bm{\mu},\bm{A})\sim \text{GHP}_{\Omega}(f, g):\\
&\quad 1)~V\sim \bm{\pi}=\{\pi_1,...,\pi_{V_{\max}}\},\\
&\quad 2)~\mathcal{V}=\{1,..,V\},~\text{and}~x_v\sim \text{Uniform}(\Omega),~\forall v\in\mathcal{V}.\\
&\quad 3)~\mu_v=f(x_v),~a_{vv'}=\frac{1}{V_{\max}D}g(x_{v}, x_{v'}).\\
&\bm{N}(t)\sim \text{HP}_\mathcal{V}(\bm{\mu},\bm{A}).
\end{aligned}
\end{eqnarray}
Here, $\bm{\pi}$ is a categorical distribution on $\{1, ..., V_{\max}\}$, which is often set as a uniform distribution, and $V_{\max}$ is the maximum number of event types supported by our model. 
We treat $\Omega$ as an uncountable event type space. 
In each trial, we sample $V$ latent event types $\{x_v\}_{v=1}^{V}$ from $\Omega$, where the number of the event types $V$ is sampled from $\bm{\pi}$.
Based on $\{x_v\}_{v=1}^{V}$, we sample $\bm{\mu}$ and $\bm{A}$ from $f$ and $g$, respectively, and instantiate a Hawkes process.
Different from (\ref{eq:generate_a}), we set $a_{vv'}=\frac{1}{V_{\max} D}g(x_{v}, x_{v'})$ in (\ref{eq:ghp}) to ensure the Hawkes process is stationary.%\footnote{\textbf{{The proofs of Property~\ref{prop:stationary}-\ref{prop:error} are given in Appendix A.}}}
\begin{property}[Stationarity]\label{prop:stationary}
$\text{HP}_\mathcal{V}(\bm{\mu},\bm{A})\sim \text{GHP}_\Omega(f, g)$ is asymptotically stationary as long as $|\mathcal{V}|\leq V_{\max}$. 
\end{property}
Therefore, we can readily generate an event sequence $\bm{N}(t)$ from $\text{HP}_{\mathcal{V}}(\bm{\mu},\bm{A})$ by various simulation methods, $e.g.$, the branch processing~\cite{moller2006approximate} and Ogata's thinning method~\cite{ogata1981lewis}. 

The key challenge in using GHP is that we cannot observe $\{x_v\}_{v=1}^{V}$ because both the event type space $\Omega$ and the sampled event types are latent.  
Accordingly, for the generated Hawkes processes and their event sequences, we cannot directly match their event types ($i.e.$, $\{x_v\}_{v=1}^{V}$) with the event types of real-world sequences. 
To solve this problem, in Section~\ref{ssec:align} we will leverage optimal transport~\cite{villani2008optimal,peyre2019computational} to measure the distance between heterogeneous event sequences. 
The learned optimal transport helps us to find a {\it soft alignment} between the generated event types and the real ones, which not only makes the generated event types and the corresponding point processes semantically meaningful but also builds the foundation for the learning method of our model (See Section~\ref{sec:learn}). 

\subsection{Extensions}\label{ssec:implementation}
The proposed GHP provides us with a new framework to jointly model heterogeneous event sequences. 
Beyond Hawkes processes, our GHP model can be readily extended to generate more sophisticated types of point processes.

\textbf{Nonlinear Hawkes process}. For nonlinear Hawkes process (also called mutually-correcting process)~\cite{zhu2013nonlinear,xu2016patient}, its intensity function is $\lambda_v(t)=\exp(\mu_v + \sum_{t_i<t}\phi_{vv_i}(t,t_i))$ and the parameters can be negative. 
In this case, we can implement $\text{GHP}_{\Omega}(f, g)$ with $f:\Omega\mapsto(-\infty,+\infty)$ and $g:\Omega^2\mapsto(-\infty,+\infty)$, respectively.

\textbf{Multi-kernel Hawkes process}. The multi-kernel Hawkes process constructs its impact functions by a set of decay kernels~\cite{xu2016learning}, $i.e.$, $\phi_{vv'}(t)=\sum_{m=1}^{M}a_{vv'm}\eta_{m}(t)$, where the coefficients $a_{vv'm}$'s are formulated as $M$ matrices $\{\bm{A}_m\}_{m=1}^{M}$.  
In this case, we need to introduce several graphons, $i.e.$, $\{g_1(x,y),...,g_{M}(x,y)\}$, to generate the $M$ matrices, and our GHP model becomes $\text{GHP}_{\Omega}(f, \{g_m\}_{m=1}^{M})$.

\textbf{Time-varying Hawkes process}. The time-varying Hawkes process applies shift-varying impact functions, $i.e.$, $\phi_{vv'}(t,t')=a_{vv'}(t)\eta(t-t')$, where the coefficient $a_{vv'}(t)$ becomes a function of time. 
Similar to the multi-kernel Hawkes process, when using a set of bases to represent the coefficient function~\cite{xu2017learning}, $i.e.$, $a_{vv'}(t)=\sum_{m=1}^{M}a_{vv'm}h_{m}(t)$, where $h_m(t)$ is the $m$-th base, we can still apply multiple graphons to generate impact functions and rewrite our GHP model as $\text{GHP}_{\Omega}(f, \{g_m\}_{m=1}^{M})$.

\textbf{Neural Hawkes process} Most existing neural network-based Hawkes processes apply embedding layers to map the index of each event type to its latent code~\cite{mei2017neural,zhang2020self,zuo2020transformer}.
For the neural Hawkes process, we can replace the embedding layer with a function $f(x):\Omega\mapsto \mathbb{R}^M$ such that we can generate $M$-dimensional latent codes for uncountable event types in $\Omega$. 
If the neural Hawkes process considers the interactions of different event types~\cite{wang2016coevolutionary}, we can set the graphon as $g(x,y)=p(x)^{\top}q(y)$, where $p(x):\Omega\mapsto \mathbb{R}^M$ and $q(y):\Omega\mapsto \mathbb{R}^M$, respectively. 
Accordingly, the GHP becomes $\text{GHP}_{\Omega}(f, p, q)$. 
Besides changing the point process model, we can also implement $f(x)$ and $g(x,y)$ by deep neural networks, which is left for future work.

\subsection{Theoretical analysis of the GHP model}\label{ssec:stability}
In addition to verifying the stationarity of generated Hawkes processes, we demonstrate two more properties of GHP  based on the following mild assumptions.
\begin{assumption}\label{assum:lip}
For $\text{GHP}_{\Omega}(f, g)$, we assume\vspace{-10pt}
\begin{itemize}
    \item[A)]$f(x)$ is bi-Lipschitz continuous on $\Omega$, denoted as $f\in \text{Lip}_{\Omega}(C_{1}^f,C_{2}^f)$: $\exists~0<C_1^f\leq C_2^f<\infty$, $C_{1}^f\|x-x'\|_2\leq |f(x)-f(x')|\leq C_{2}^f\|x-x'\|_2$, $\forall x,x'\in\Omega$.
    \vspace{-5pt}
    \item[B)]$f(x)$ has a unique zero point in $\Omega$, $i.e.$, $f(x_0^f)=0$.
    \vspace{-5pt}
    \item[C)]$g(x, y)$ is strictly smaller than $1$, $i.e.$, $g:\Omega^2\mapsto [0, 1)$.\vspace{-5pt}
    \item[D)]$g(x,y)$ is Lipschitz continuous on $\Omega^2$, denoted as $g\in\text{Lip}_{\Omega^2}(C^g)$: $\exists~0<C^g<\infty$, $|g(x,y)-g(x',y')|\leq C^g\|[x;y]-[x';y']\|_2$, $\forall [x;y],[x';y']\in\Omega^2$.
\end{itemize}
\end{assumption}
\vspace{-5pt}
Clearly, GHP defined in (\ref{eq:functions}) satisfies the assumptions. 
Based on the above assumptions, we prove that the parameters of the Hawkes process generated by our GHP model is Lipschitz continuous. 
\begin{property}[Lipschitz Continuity]\label{prop:lip}
For $\text{HP}_{\mathcal{V}}(\bm{\mu}_1,\bm{A}_1)$ and $\text{HP}_{\mathcal{U}}(\bm{\mu}_2,\bm{A}_2)\sim \text{GHP}_{\Omega}(f, g)$, where $\text{GHP}_{\Omega}(f, g)$ satisfies Assumption~\ref{assum:lip}, their parameters satisfy
\begin{eqnarray}\label{eq:lip}
\begin{aligned}
&C_{1}^f d_{\text{w}}(\bm{x}_1,\bm{x}_2)\leq d_{\text{w}}(\bm{\mu}_1,\bm{\mu}_2)\leq C_{2}^f d_{\text{w}}(\bm{x}_1,\bm{x}_2),\\
&d_{\text{w}}(\bm{A}_1,\bm{A}_2)\leq C^g d_{\text{w}}(\bm{x}_1^{\times},\bm{x}_2^{\times}),\\
&d_{\text{gw}}(\bm{A}_1,\bm{A}_2)\leq C^g d_{\text{gw}}(\bm{x}_1^{\times},\bm{x}_2^{\times}),
\end{aligned}
\end{eqnarray}
where $\bm{x}_1=\{x_{v,1}\}_{v=1}^{|\mathcal{V}|}$ and $\bm{x}_2=\{x_{u,2}\}_{u=1}^{|\mathcal{U}|}$ are the latent event types, and $\bm{x}_1^{\times}=\{[x_{v,1};x_{v',1}]\}_{v,v'=1}^{|\mathcal{V}|}$ and $\bm{x}_2^{\times}=\{[x_{u,2};x_{u',2}]\}_{u,u'=1}^{|\mathcal{U}|}$ enumerate the pairs of the latent event types.
$d_{\text{w}}$ is the discrete Wasserstein distance (or called the earth mover's distance) and the $d_{\text{gw}}$ is the discrete Gromov-Wasserstein distance.\footnote{The definitions of $d_{\text{w}}$ and $d_{\text{gw}}$ are given in Appendix A.}
\end{property}
Property~\ref{prop:lip} shows that $i$) for the generated Hawkes processes, the difference between their parameters is bounded by the difference between their latent event types; and $ii$) the parameters of each generated Hawkes process are robust to the perturbations of the latent event types. 

Because the difference between generated Hawkes processes is bounded, the difference between the corresponding event sequences is bounded as well.
Specifically, for a point process, its average intensity vector, defined as $\bar{\bm{\lambda}} := \frac{\mathbb{E}[\text{d}\bm{N}(t)]}{\text{d}t}$, reflects the dynamics of its event sequences~\cite{chiu2013stochastic}. 
For this key statistics, we have 
\begin{property}\label{prop:error}
For $\text{HP}_{\mathcal{V}}(\bm{\mu}_1,\bm{A}_1)$ and $\text{HP}_{\mathcal{U}}(\bm{\mu}_2,\bm{A}_2)\sim \text{GHP}_{\Omega}(f, g)$, where $\text{GHP}_{\Omega}(f, g)$ satisfies Assumption~\ref{assum:lip} and $|\mathcal{V}|\leq |\mathcal{U}|$, their average intensity vectors, $i.e.$, $\bar{\bm{\lambda}}_1$ and $\bar{\bm{\lambda}}_2$, satisfy
\begin{eqnarray}\label{eq:error}
\begin{aligned}
\frac{d_{\text{w}}(\bar{\bm{\lambda}}_1,\bar{\bm{\lambda}}_2)}{\|\bar{\bm{\lambda}}_1\|_2}
\leq& \frac{\frac{\sqrt{2U}C^g}{C_1^f\|\bm{I}_V-D\bm{A}_1\|_2}+\frac{1}{\|\bm{\mu}_1\|_2}}{1-D\|\bm{A}_1\|_2}\Bigl(d_{\text{w}}(\bm{\mu}_1, \bm{\mu}_2)\\
&+\sqrt{\frac{U-V}{V}}\|\bm{\mu}_1\|_2\Bigr)+\sqrt{\frac{U-V}{UV}},
\end{aligned}
\end{eqnarray}
where $\|\cdot\|_2$ is the $\ell_2$-norm for vectors and the spectral norm for matrices, $U=|\mathcal{U}|$, $V=|\mathcal{V}|$, $D=\int_{0}^{\infty} \eta(t)\text{d}t$ is the integral of the decay kernel used in the Hawkes processes, and $C_1^f$ and $C^g$ are the constants defined in Assumption~\ref{assum:lip}.
\end{property}
Furthermore, if $|\mathcal{V}|=|\mathcal{U}|$, we can simplify Property~\ref{prop:error} as 
\begin{corollary}\label{coro:error_s}
For $\text{HP}_{\mathcal{V}}(\bm{\mu}_1,\bm{A}_1)$ and $\text{HP}_{\mathcal{U}}(\bm{\mu}_2,\bm{A}_2)\sim \text{GHP}_{\Omega}(f, g)$, where $\text{GHP}_{\Omega}(f, g)$ satisfies Assumption~\ref{assum:lip} and $|\mathcal{V}|=|\mathcal{U}|=V$, we have
\begin{eqnarray*}
\begin{aligned}
\frac{d_{\text{w}}(\bar{\bm{\lambda}}_1,\bar{\bm{\lambda}}_2)}{\|\bar{\bm{\lambda}}_1\|_2}
\leq\frac{d_{\text{w}}(\bm{\mu}_1, \bm{\mu}_2)}{1-D\|\bm{A}_1\|_2}\left(\frac{\sqrt{2V}{C^g}/{C_1^f}}{\|\bm{I}_V-D\bm{A}_1\|_2}+\frac{1}{\|\bm{\mu}_1\|_2}\right).
\end{aligned}
\end{eqnarray*}
\end{corollary}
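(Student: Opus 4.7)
The plan is to observe that Corollary~\ref{coro:error_s} is simply the equal-dimension specialization of Property~\ref{prop:error}, so essentially no new estimates are needed: I would set $U=V$ in inequality~(\ref{eq:error}) and watch the dimension-mismatch terms vanish. Concretely, the factor $\sqrt{(U-V)/V}$ multiplying $\|\bm{\mu}_1\|_2$ becomes $0$, the additive constant $\sqrt{(U-V)/(UV)}$ becomes $0$, and the prefactor $\sqrt{2U}$ reduces to $\sqrt{2V}$. Factoring $d_{\text{w}}(\bm{\mu}_1,\bm{\mu}_2)/(1-D\|\bm{A}_1\|_2)$ out of the remaining expression then reproduces the stated inequality verbatim.

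If instead a self-contained argument is desired (the equal-dimension case is notably cleaner because no zero-padding of dimensions is required), I would start from the closed-form mean intensity $\bar{\bm{\lambda}}=(\bm{I}_V-D\bm{A})^{-1}\bm{\mu}$, available under the stationarity guaranteed by Property~\ref{prop:stationary}. For an optimal permutation $P$ realizing $d_{\text{w}}(\bar{\bm{\lambda}}_1,\bar{\bm{\lambda}}_2)$, the resolvent identity $B^{-1}-C^{-1}=B^{-1}(C-B)C^{-1}$ yields
\begin{equation*}
\bar{\bm{\lambda}}_1 - P\bar{\bm{\lambda}}_2 = (\bm{I}_V-D\bm{A}_1)^{-1}(\bm{\mu}_1 - P\bm{\mu}_2) + D(\bm{I}_V-D\bm{A}_1)^{-1}(\bm{A}_1 - P\bm{A}_2 P^{\top})\,P\bar{\bm{\lambda}}_2.
\end{equation*}
Taking $\ell_2$-norms, using $\|(\bm{I}_V-D\bm{A})^{-1}\|_2\leq 1/(1-D\|\bm{A}\|_2)$, and dividing by $\|\bar{\bm{\lambda}}_1\|_2$ would produce one term proportional to $d_{\text{w}}(\bm{\mu}_1,\bm{\mu}_2)/\|\bm{\mu}_1\|_2$ (matching the $1/\|\bm{\mu}_1\|_2$ piece of the stated bound) and one term proportional to $\|\bm{A}_1-P\bm{A}_2 P^{\top}\|_2$.

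The main obstacle on this direct route is that the stated bound contains \emph{no} matrix-distance term, yet the decomposition above naturally produces $\|\bm{A}_1-P\bm{A}_2 P^{\top}\|_2$. Converting this back into $d_{\text{w}}(\bm{\mu}_1,\bm{\mu}_2)$ is precisely what forces the use of Property~\ref{prop:lip}: the upper bound $d_{\text{w}}(\bm{A}_1,\bm{A}_2)\leq C^g d_{\text{w}}(\bm{x}_1^{\times},\bm{x}_2^{\times})$ transfers the matrix distance to the latent-pair distance, a rescaling $d_{\text{w}}(\bm{x}_1^{\times},\bm{x}_2^{\times})\leq\sqrt{2V}\,d_{\text{w}}(\bm{x}_1,\bm{x}_2)$ compresses it to the one-dimensional latent distance, and the bi-Lipschitz lower bound $C_1^f d_{\text{w}}(\bm{x}_1,\bm{x}_2)\leq d_{\text{w}}(\bm{\mu}_1,\bm{\mu}_2)$ finally expresses everything in terms of $d_{\text{w}}(\bm{\mu}_1,\bm{\mu}_2)$. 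Chaining these inequalities produces exactly the $\sqrt{2V}C^g/C_1^f$ prefactor in the corollary. Since Property~\ref{prop:error} has already executed this chain in the more general $U\geq V$ setting, quoting it and setting $U=V$ remains the shortest and cleanest proof.
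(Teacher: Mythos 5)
Your primary argument---substituting $U=V$ into Property~\ref{prop:error} so that the $\sqrt{(U-V)/V}$ and $\sqrt{(U-V)/(UV)}$ terms vanish and $\sqrt{2U}$ becomes $\sqrt{2V}$---is exactly the paper's proof of Corollary~\ref{coro:error_s}, and it is correct. The alternative self-contained sketch you outline is a reasonable elaboration of how Property~\ref{prop:error} itself is established, but it is not needed here.
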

%\textbf{The properties above are proven in Appendix A.}

\section{Learning Algorithm}\label{sec:learn}
\subsection{A reward-augmented maximum likelihood}
We propose a novel method to learn GHP model from observed heterogeneous event sequences.
Denote $\mathcal{N}=\{\bm{N}_l(t)\}_{l=1}^{L}$ as the set of real-world event sequences and $\widehat{\mathcal{N}}=\{\widehat{\bm{N}}_k(t)\}_{k=1}^{K}$ the set of the event sequences generated by our model. 
Because the correspondence of real-world event types in the latent event type space is unknown, as mentioned in Section~\ref{ssec:ghp}, we need to {\it simultaneously} learn the underlying graphon of our model and align the event types of the generated Hawkes processes with the real ones. 
To achieve this aim, we formulate the following optimization problem,
\begin{eqnarray}\label{eq:raml2}
\begin{aligned}
\sideset{}{_{\theta}}\min -\sideset{}{_{\widehat{\bm{N}}_k\in\widehat{\mathcal{N}}}}\sum \sideset{}{_{\bm{N}_l\in\mathcal{N}}}\max q(\widehat{\bm{N}}_k| \bm{N}_l)\log p(\widehat{\bm{N}}_k;\theta).
\end{aligned}
\end{eqnarray}
where $p(\widehat{\bm{N}}_k;\theta) = \frac{\prod_{(t_i, v_i)}\lambda_{v_i}^k(t_i;\theta)}{\exp(\sum_{v\in\mathcal{V}}\int_{0}^{T}\lambda_v^k(t;\theta)\text{d}t)}$ is the likelihood of the $k$-th generated event sequence, $\theta$ represents the model parameter $\{f_1, f_2, \{g_{ij}^{m}\}\}$, and $q(\widehat{\bm{N}}_k|\bm{N}_l)$ is the probability of $\widehat{\bm{N}}_k$ conditioned on the $l$-th real sequence $\bm{N}_l$. 
Essentially, the conditional probability $q(\widehat{\bm{N}}|\bm{N})$ measures the similarity between the generated sequence and the real one. 
When the two sequences yield the same generative mechanism and have similar dynamics, the real sequence provides useful prior information, and thus, the occurrence of the generated sequence is with a high probability. 

In (\ref{eq:raml2}), the log-likelihood of each generated sequence is weighted by its maximum conditional probability with respect to the real sequences, $i.e.$, $\max_{\bm{N}_l\in\mathcal{N}} q(\widehat{\bm{N}}_k | \bm{N}_l))$. 
The weight measures the overall similarity between the the generated sequence $\widehat{\bm{N}}_k$ and the real ones. 
A large weight indicates that the generated sequence is informative for our learning problem because it is similar to at least one real sequence. 
Otherwise, the sequence is less useful. 
Additionally, assuming the empirical distribution of the real sequences to be uniform, we have $q(\widehat{\bm{N}}|\bm{N})\propto q(\widehat{\bm{N}},\bm{N})$, and the optimization problem becomes,
\begin{eqnarray}\label{eq:raml3}
\begin{aligned}
\sideset{}{_{\theta}}\min -\sideset{}{_{\widehat{\bm{N}}_k\in\widehat{\mathcal{N}}}}\sum \sideset{}{_{\bm{N}_l\in\mathcal{N}}}\max q(\widehat{\bm{N}}_k, \bm{N}_l)\log p(\widehat{\bm{N}}_k;\theta).
\end{aligned}
\end{eqnarray}
{\sc Remark.} the above formulation (\ref{eq:raml3}) can be considered as a variant of the reward-augmented maximum likelihood (RAML) estimation method~\cite{norouzi2016reward} (see also Section~\ref{ssec:analysis}). 
For sequence $\widehat{\bm{N}}_k$, the weight $\max_{\bm{N}_l\in\mathcal{N}} q(\widehat{\bm{N}}_k, \bm{N}_l)$ plays the role of its \textit{reward} and is assigned to its log-likelihood. 
The higher reward the log-likelihood obtains, the more significant it is in learning. 

\subsection{Hierarchical optimal transport between heterogeneous event sequences}\label{ssec:align}
The key of our learning algorithm, which is also its main novelty, is computing the joint distribution $q(\widehat{\bm{N}},\bm{N})$ based on the hierarchical optimal transport (HOT) model~\cite{lee2019hierarchical,yurochkin2019hierarchical}. 
In particular, the HOT model not only captures the optimal transport between the generated event sequences and the real ones but also captures the optimal transport between their event types. 
Given $\widehat{\mathcal{N}}=\{\widehat{\bm{N}}_{k}\}_{k=1}^{K}$ and $\mathcal{N}=\{\bm{N}_{l}\}_{l=1}^{L}$, we compute the optimal transport distance between them as $d_{\text{ot}}(\widehat{\mathcal{N}}, \mathcal{N})$
\begin{eqnarray}\label{eq:ot_tpp2}
\begin{aligned}
%&d_{\text{ot}}(\widehat{\mathcal{N}}, \mathcal{N})\\
:=&\sideset{}{_{\bm{Q}\in\Pi\left(\frac{1}{K}\bm{1}_K, \frac{1}{L}\bm{1}_L\right)}}\min \sideset{}{_{k,l}}\sum q(\widehat{\bm{N}}_k,\bm{N}_l)d(\widehat{\bm{N}}_{k}, \bm{N}_{l})\\
=&\sideset{}{_{\bm{Q}\in\Pi\left(\frac{1}{K}\bm{1}_K, \frac{1}{L}\bm{1}_L\right)}}\min \langle\bm{D},\bm{Q}\rangle,
\end{aligned}
\end{eqnarray}
where the polytope $\Pi\left(\frac{1}{K}\bm{1}_K, \frac{1}{L}\bm{1}_L\right)=\{\bm{Q}\geq\bm{0}~|~\bm{Q}\bm{1}_{L}=\frac{1}{K}\bm{1}_K,\bm{Q}^{\top}\bm{1}_{K}=\frac{1}{L}\bm{1}_L\}$ is the set of the doubly-stochastic matrices having marginals $\frac{1}{K}\bm{1}_K$ and $\frac{1}{L}\bm{1}_L$, $\bm{D}=[d(\widehat{\bm{N}}_{k}, \bm{N}_{l})]\in\mathbb{R}^{K \times L}$ is a distance matrix, whose element measures the distance between the sequences. 

%$i.e.$,
$\bm{Q}^*=\arg\min_{\bm{Q}\in\Pi(\frac{1}{K}\bm{1}_K, \frac{1}{L}\bm{1}_L)}\langle\bm{D},\bm{Q}\rangle$,
the optimizer of (\ref{eq:ot_tpp2}), is the optimal transport matrix between the two sets of event sequences. 
When $\widehat{\mathcal{N}}$ and $\mathcal{N}$ correspond to the sets of generated event sequences and the real ones, this matrix is the desired joint distribution, $i.e.$, $\bm{Q}^*=[q^*(\widehat{\bm{N}}_k,\bm{N}_l)]$. 
This optimization problem can be solved by many efficient methods, $e.g.$, the Sinkhorn scaling method~\cite{cuturi2013sinkhorn} and the proximal point method~\cite{xie2020fast}.

\begin{figure}[t!]
    \centering
    \includegraphics[width=0.85\linewidth]{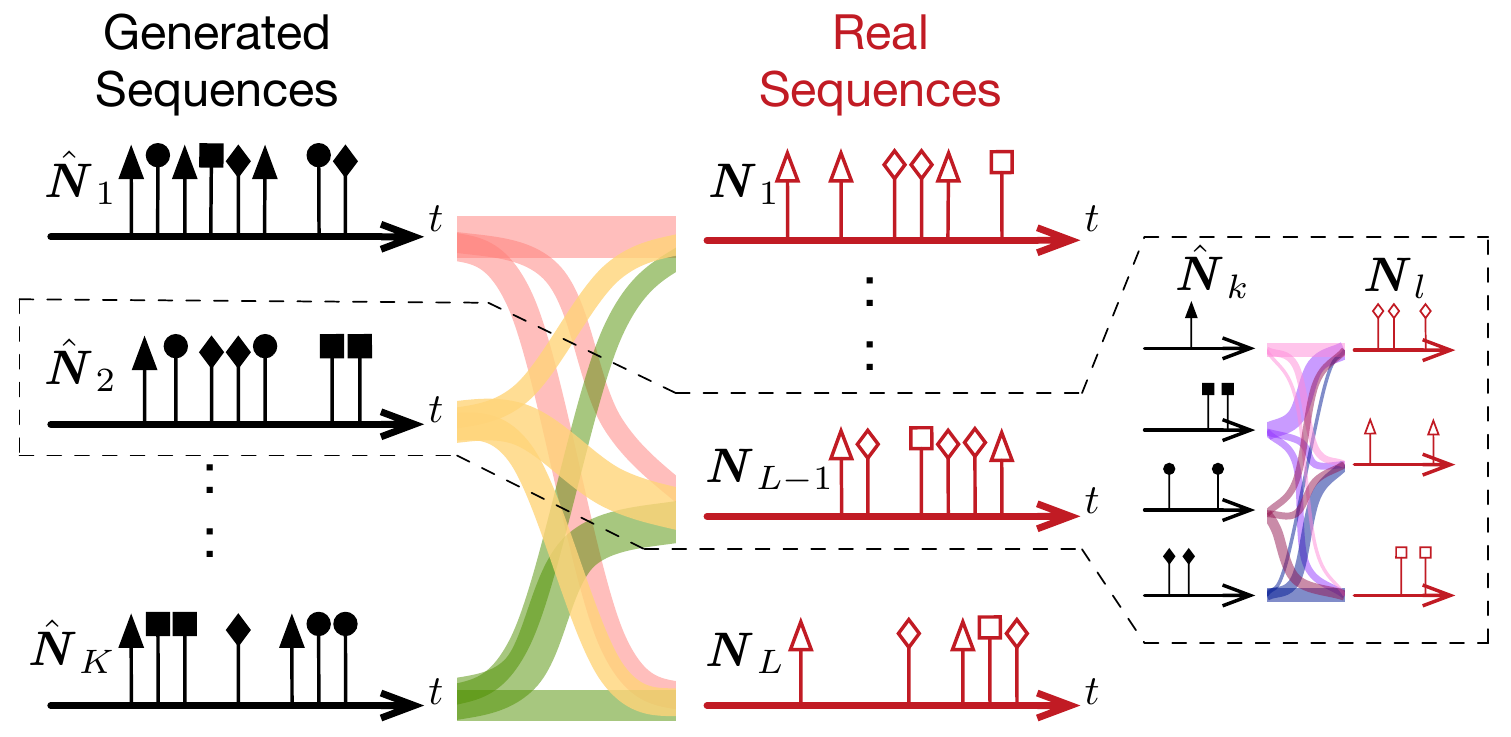}
    \vspace{-4mm}
    \caption{\small{An illustration of the hierarchical optimal transport distance between two sets of event sequences.}}
    \label{fig:hot_seq}
\end{figure}

For $\widehat{\bm{N}}_k=\{\widehat{N}_{u}^{k}\}_{u\in\mathcal{V}_k}$ and $\bm{N}_l=\{N_{v}^{l}\}_{v\in\mathcal{V}_l}$, where $\mathcal{V}_k$ and $\mathcal{V}_l$ are the sets of their event types, we also implement their distance %$d(\widehat{\bm{N}}_{k}, \bm{N}_{l})$ 
as an optimal transport distance: $d(\widehat{\bm{N}}_k,\bm{N}_l)$
\begin{eqnarray}\label{eq:ot_cp}
\begin{aligned}
%&d(\widehat{\bm{N}}_k,\bm{N}_l)\\
:=&\sideset{}{_{\bm{T}\in\Pi\left(\frac{1}{|\mathcal{V}_k|}\bm{1}_{|\mathcal{V}_k|}, \frac{1}{|\mathcal{V}_l|}\bm{1}_{|\mathcal{V}_l|}\right)}}\min \sideset{}{_{u,v}}\sum T_{uv}d(\widehat{N}_{u}^k, N_{v}^l)\\
=&\sideset{}{_{\bm{T}\in\Pi\left(\frac{1}{|\mathcal{V}_k|}\bm{1}_{|\mathcal{V}_k|}, \frac{1}{|\mathcal{V}_l|}\bm{1}_{|\mathcal{V}_l|}\right)}}\min \langle\bm{D}_{kl},\bm{T}\rangle,
\end{aligned}
\end{eqnarray}
where $\bm{D}_{kl}=[d(\widehat{N}_{u}^k, N_{v}^l)]\in\mathbb{R}^{|\mathcal{V}_k|\times |\mathcal{V}_l|}$ is the distance matrix for $\widehat{\bm{N}}_k$ and $\bm{N}_l$, and $d(\widehat{N}_{u}^k, N_{v}^l) = \frac{1}{T}\int_{0}^{T}|\widehat{N}_u^k(t) - N_v^l(t)|\text{d}t$ measures the difference between the sequence of the type-$u$ events and that of the type-$v$ events in $[0, T]$. 

Plugging (\ref{eq:ot_cp}) into (\ref{eq:ot_tpp2}), we measure the difference between two sets of heterogeneous event sequences by a hierarchical optimal transport distance, in which the ground distance used in (\ref{eq:ot_tpp2}) is also an optimal transport distance.
Figure~\ref{fig:hot_seq} illustrates the hierarchical optimal transport distance.
In the proposed HOT distance, the optimal transport matrix $\bm{Q}^*$ derived by (\ref{eq:ot_tpp2}) achieves a {\it soft} alignment between the generated sequences and the real ones, which corresponds to the joint distribution in (\ref{eq:raml3}). 
Additionally, the optimal transport matrix $\bm{T}^*$ derived by (\ref{eq:ot_cp}) aligns the event types of a generated sequence with those of a real one, which indicates the correspondence of real-world event types in the latent event type space. 
In Section~\ref{sec:exp}, we will show that based on $\bm{T}^*$  GHP can generate semantically-meaningful Hawkes processes and their event sequences.

\subsection{Further analysis}\label{ssec:analysis} 
Our HOT-based RAML method (denoted as RAML-HOT) has two advantages over the original RAML in~\cite{norouzi2016reward}. 
Firstly, the reward used in the original RAML is the sum of the conditional probabilities, $i.e.$, $\sum_{\bm{N}_l\in\mathcal{N}} q(\hat{\bm{N}}_k | \bm{N}_l)$. 
Accordingly, a generated sequence earns a high reward only when it is similar to most real sequences. 
This setting is unnecessary even unreasonable in our problem: a generated sequence is likely to close to a small number of real sequences because the real sequences are heterogeneous and yield different Hawkes processes. 
Secondly, the original RAML implements $q(\widehat{\bm{N}} | \bm{N})$ empirically as an exponential pay-off distribution, $i.e.$,  
$q(\widehat{\bm{N}} | \bm{N}) = \frac{1}{Z}\exp(\frac{r(\widehat{\bm{N}},\bm{N})}{\tau})$, where $Z$ is the normalizing constant, $\tau$ is the hyperparameter, and $r(\widehat{\bm{N}},\bm{N})$ is a predefined reward function. 
Different from the original RAML, our RAML-HOT method computes the joint distribution $q(\widehat{\bm{N}},\bm{N})$ based on the HOT distance and the reward $\max_{\bm{N}_l\in\mathcal{N}} q(\widehat{\bm{N}}_k, \bm{N}_l)$ is more reasonable and interpretable. 

Algorithm~\ref{alg:pg} shows the steps of our learning method and the original RAML when learning a GHP model.
Given $L$ real sequences, for each of them we denote $\mathcal{O}(V)$ as the number of its event types and $\mathcal{O}(I)$ the number of events per event type. 
When learning GHP, RAML-HOT generates a batch of sequences and computes its HOT distance to a batch of real sequences.
Because of solving $B^2 + 1$ optimal transport problems, its computational complexity is $\mathcal{O}(B^2IV^2)$, where $B$ is batch size. 
Regarding computational cost, GHP is suitable for modeling {\it sparse} heterogeneous event sequences, in which both $V$ and $I$ are small and thus our RAML-HOT method is efficient. 
Such sequences are {\it common} in real-world applications: $i$) The admissions of different patients in a hospital cover many kinds of diseases, but each patient often has a limited number of diseases and admissions. $ii$) The Linkedin users cover many types of jobs, but each user has few job-hopping behaviors among a small number of jobs. 
In such situations, GHP captures the point process per sequence, whose number of event types ($i.e.$, $V$) is limited. 
Compared to modeling a large point process model for all the sequences, applying our GHP model can mitigate the risk of over-fitting. 

\section{Related Work}
\textbf{Hawkes processes}.
Because of its quantitative power and good interpretability, Hawkes process has been a significant tool for event sequence analysis and achieved encouraging performance in many applications like social network analysis~\cite{zhou2013learning,farajtabar2017coevolve} and financial engineering~\cite{bacry2015hawkes}. 
These years, many efforts have been made to develop the variants of Hawkes process, $e.g.$, the mixture model of Hawkes processes~\cite{xu2017dirichlet}, the recurrent neural networks in the continuous time~\cite{du2016recurrent,mei2017neural} and the Hawkes processes with attention mechanisms~\cite{zhang2020self,zuo2020transformer}. 
Most existing models are learned by the maximum likelihood estimation. 
Recently, more cutting-edge techniques are applied, $e.g.$, Wasserstein generative adversarial network~\cite{xiao2017wasserstein}, reinforcement learning~\cite{li2018learning}, and noisy contrastive estimation~\cite{mei2020noise}. 
However, most existing methods cannot learn multiple Hawkes processes with different event types.

\begin{algorithm}[t]
\small{
	\caption{Learning a GHP model}
	\label{alg:pg}
	\begin{algorithmic}[1]
	    \STATE \textbf{Input} Real event sequences $\mathcal{N}$.
	    \STATE Initialize the model parameter $\theta$ randomly.
	    \STATE \textbf{for} each epoch
	    \STATE \quad\textbf{for} each batch of real sequences $\{\bm{N}_b\}_{b=1}^{B}\subset \mathcal{N}$
	    \STATE \quad\quad Generate $B$ sequences $\{\widehat{\bm{N}}_b\}_{b=1}^{B}$ via (\ref{eq:ghp}).
	    \STATE \quad\quad Calculate $d(\widehat{\bm{N}}_b,\bm{N}_{b'})$ by (\ref{eq:ot_cp}) and obtain the matrix $\bm{D}$.
	    \STATE \quad\quad \textbf{RAML:}\\
	    \quad\quad\quad Set the reward function $r(\widehat{\bm{N}}_b,\bm{N}_{b'})=-d(\widehat{\bm{N}}_b,\bm{N}_{b'})$\\ 
	    \quad\quad\quad and $q(\widehat{\bm{N}}_b|\bm{N}_{b'})$ an exponential pay-off distribution.
	    \STATE \quad\quad \textbf{Our RAML-HOT:}\\ 
	    \quad\quad\quad Solve (\ref{eq:ot_tpp2}) and obtain $\bm{Q}^*=[q^*(\widehat{\bm{N}}_b,\bm{N}_{b'})]$.
	    \STATE \quad\quad Calculate the loss function in (\ref{eq:raml3}).
	    \STATE \quad\quad Update $\theta$ by the Adam algorithm~\cite{kingma2014adam}.
	\end{algorithmic}
}
\end{algorithm}

\textbf{Graphons}.
Graphon is a nonparametric graph model generating arbitrary-size graphs in an infinite dimensional space~\cite{lovasz2012large}. 
Given observed graphs, most existing methods learn graphons as stochastic block models~\cite{channarond2012classification,airoldi2013stochastic,chan2014consistent}, low-rank matrices~\cite{keshavan2010matrix,chatterjee2015matrix,xu2018rates} or Gromov-Wasserstein barycenters~\cite{xu2021learning}, which approximate graphons by 2D step functions based on the weak regularity lemma~\cite{frieze1999quick}. 
%Our work is pioneering on learning graphons from the sequences on graphs.

\textbf{Optimal transport}.
The theory of optimal transport~\cite{villani2008optimal} has been widely used in distribution estimation~\cite{boissard2015distribution} and matching~\cite{courty2017learning}, and data generation~\cite{arjovsky2017wasserstein}. 
Because of its usefulness, many methods have been proposed to compute the optimal transport efficiently, $e.g.$, the Sinkhorn scaling algorithm~\cite{cuturi2013sinkhorn} and its stochastic variant~\cite{altschuler2017near}, the Bregman ADMM algorithm~\cite{wang2014bregman}, the proximal point method~\cite{xie2020fast}, and the sliced Wasserstein distance~\cite{kolouri2018sliced}.
Recently, hierarchical optimal transport (HOT) models are proposed in~\cite{lee2019hierarchical,yurochkin2019hierarchical}, which achieve encouraging performance on data clustering. 
Our work makes the first attempt to introduce the HOT model into event sequence analysis.

\begin{figure*}[t]
    \centering
    \subfigure[$d_{\text{fgw}}(\hat{\theta},\theta)$]{
    \includegraphics[height=3.5cm]{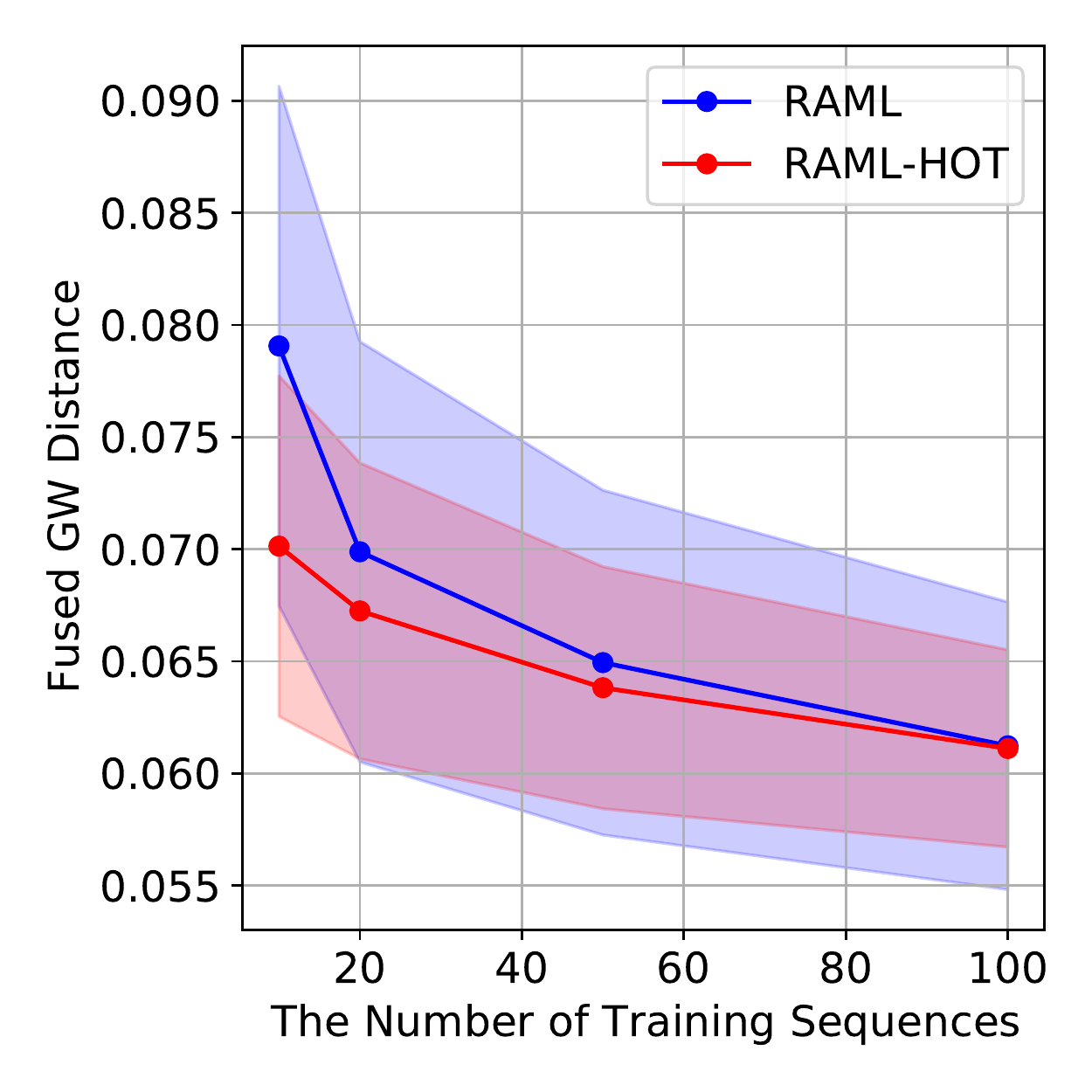}\label{fig:fgwd}
    }
    \subfigure[$d_{\text{ot}}(\widehat{\mathcal{N}},\mathcal{N})$]{
    \includegraphics[height=3.5cm]{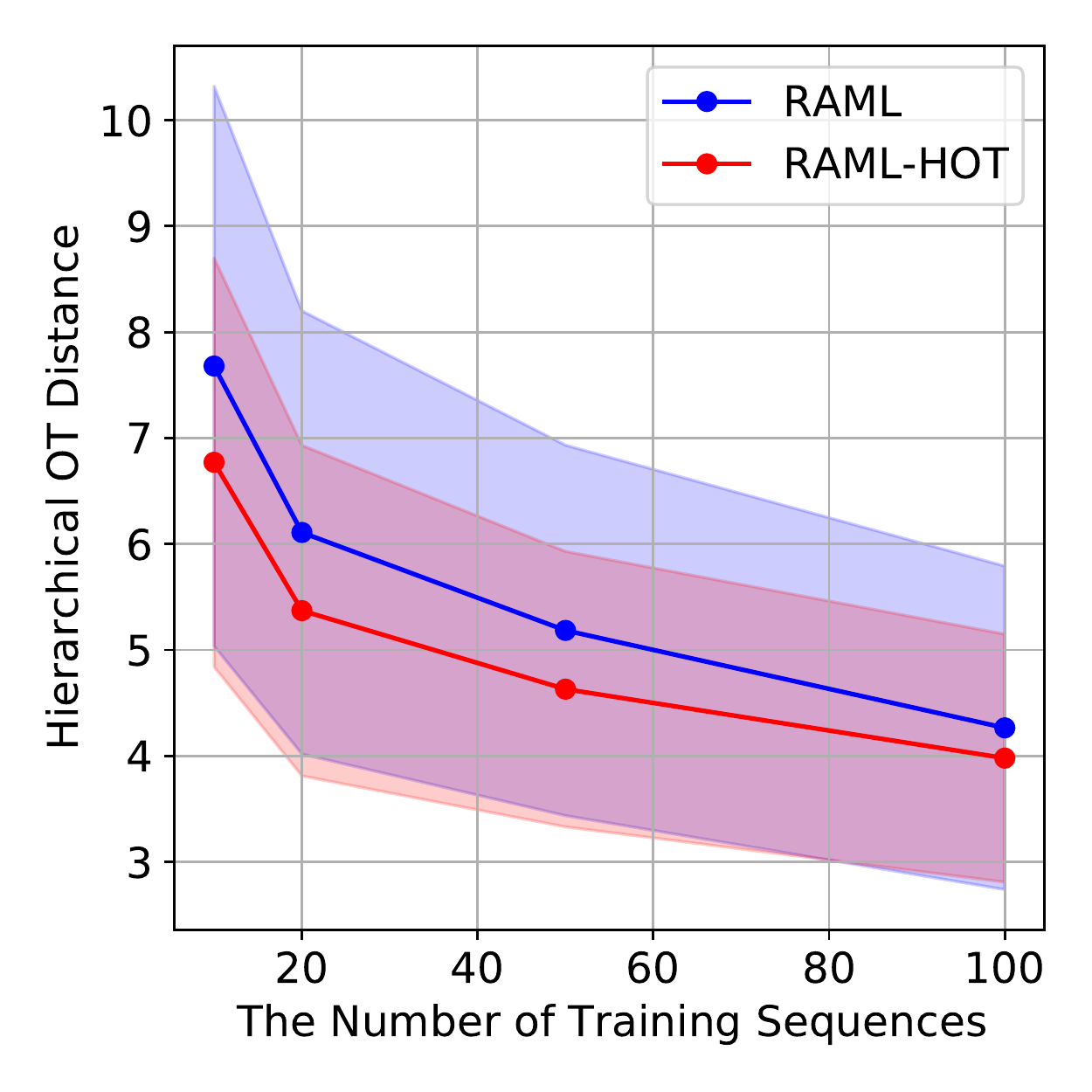}\label{fig:hot}
    }
    \subfigure[The influence of $B$]{
    \includegraphics[height=3.5cm]{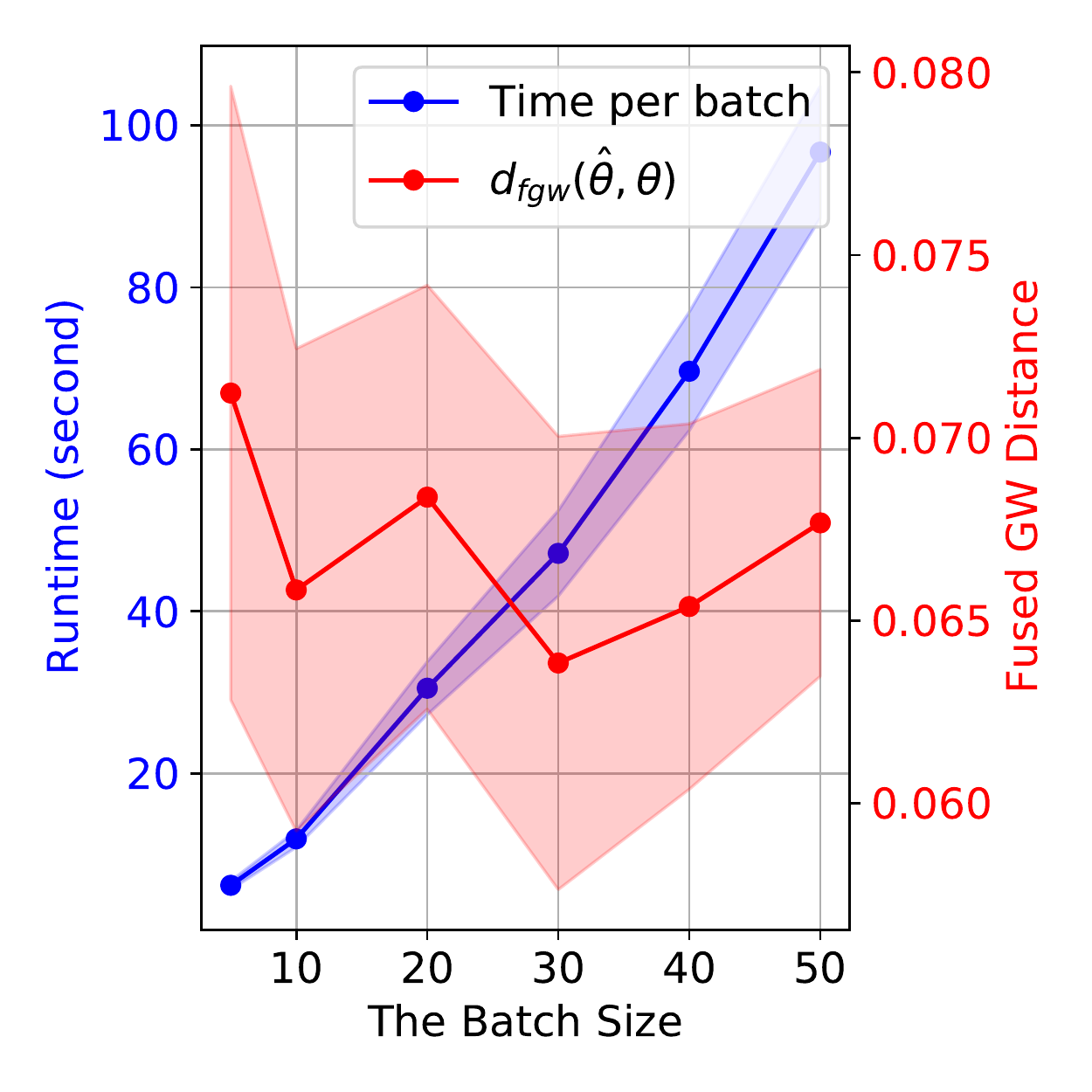}\label{fig:batch}
    }
    \subfigure[The influence of $V_{\max}$]{
    \includegraphics[height=3.5cm]{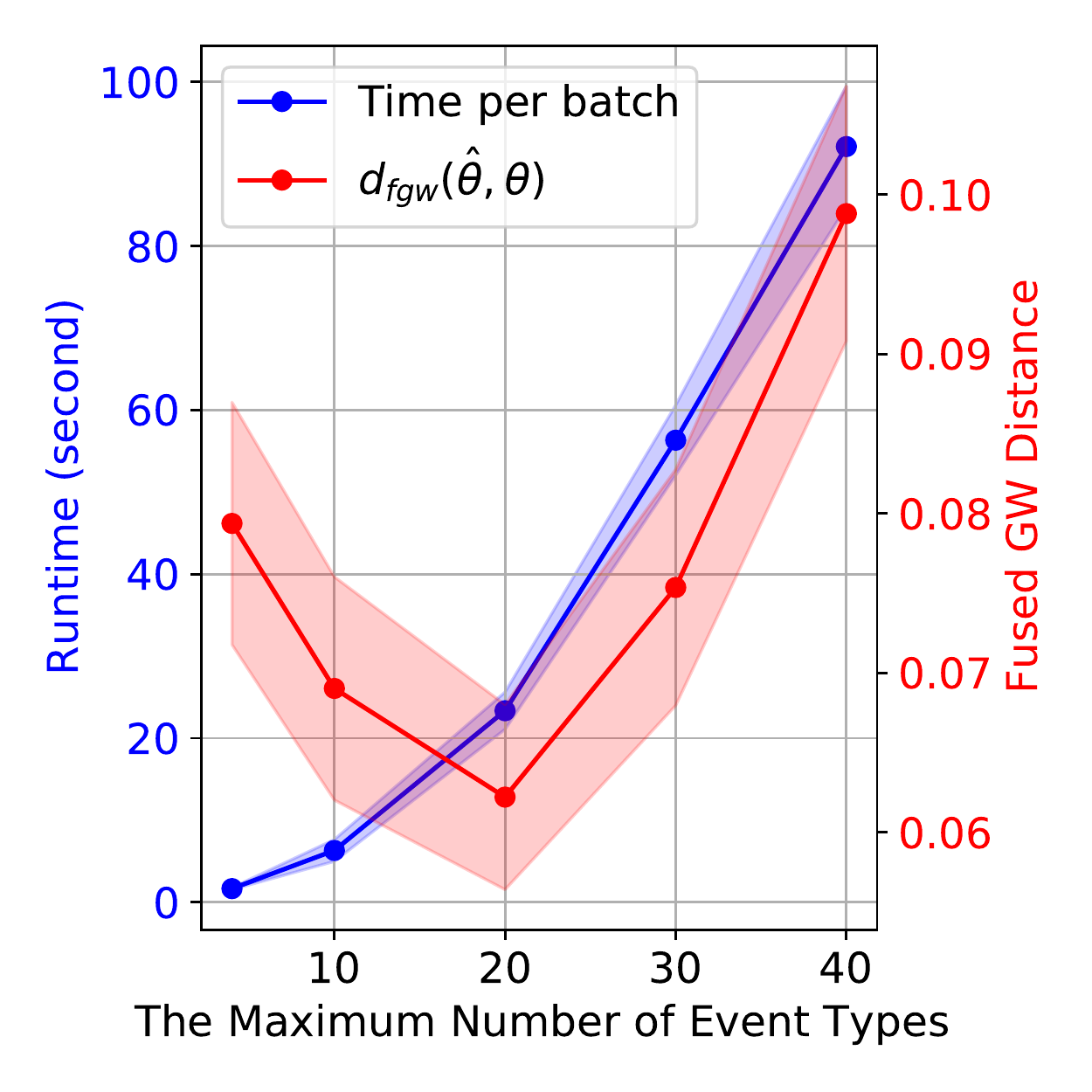}\label{fig:number}
    }
    \vspace{-4mm}
    \caption{The experimental results on synthetic data. The runtime in (c, d) is derived by running on a CPU.}
    \label{fig:syn1}
\end{figure*}

\section{Experiments}\label{sec:exp}
\subsection{Experiments on synthetic data}
To test our learning method, we first learn GHP models from synthetic heterogeneous event sequences. 
The synthetic sequences are generated by a predefined GHP model.
For the predefined model, we set $V_{\max}=20$, the decay kernel $\kappa(t)=\exp(-t)$, the number of Fourier bases ($i.e.$, the $S$ in (\ref{eq:functions})) of $g(x,y)$ as $5$, and sampled the model parameter $\theta$ from a multivariate normal distribution.
Given the predefined model, we simulate $120$ event sequences in the time window $[0, 50]$ by the steps in (\ref{eq:ghp}), in which we apply the Ogata's thinning method~\cite{ogata1981lewis}.
We select $100$ sequences for learning a new GHP model, $10$ sequences for validation, and the remaining $10$ sequences for testing the learned model. 
We evaluate the learned model based on two criteria.
Firstly, we compute the Fused Gromov-Wasseserstein (FGW) distance~\cite{vayer2018fused} between the estimated model parameter $\hat{\theta}$ and the ground truth $\theta$:
\begin{eqnarray}
\begin{aligned}
d_{\text{fgw}}(\hat{\theta},\theta):=&\sideset{}{_{\pi\in\Pi}}\inf\mathbb{E}_{x,x'\sim \pi}[|\hat{f}(x)-f(x')|^2] + \\
&\mathbb{E}_{x,x',y,y'\sim \pi\otimes\pi}[|\hat{g}(x,y)-g(x',y')|^2].
\end{aligned}
\end{eqnarray}
The FGW distance minimizes the expected error between the model parameters by finding finds an optimal transport $\pi$, whose implementation is in Appendix B. 
% The smaller the FGW distance is, the better learning result we have.
Secondly, we simulate a set of sequences based on the learned model and calculate its HOT distance to the testing set, $i.e.$, $d_{\text{ot}}(\widehat{\mathcal{N}},\mathcal{N})$. 
% When the model is trained well, the new generated sequences should be similar to those testing sequences and their HOT distance should be small. 

% We train the GHP model based on different methods and different numbers of training sequences. 
Setting the number of training sequences from $10$ to $100$, we test our learning method (\textbf{RAML-HOT}) and compare it with the original \textbf{RAML}~\cite{norouzi2016reward}. 
For each method, we set the number of epochs to be $20$ and the learning rate to be $0.01$. 
For our RAML-HOT method, we apply the Sinkhorn scaling method~\cite{cuturi2013sinkhorn} to compute the HOT distance. 
Figure~\ref{fig:fgwd} and Figure~\ref{fig:hot} show the averaged performance of the two learning methods in $10$ trials. 
With the increase of training data, both our RAML-HOT and the RAML improve their learning results consistently, achieving smaller $d_{\text{fgw}}(\hat{\theta},\theta)$ and $d_{\text{ot}}(\widehat{\mathcal{N}},\mathcal{N})$ with smaller standard deviation. 
Moreover, we can find that our RAML-HOT method outperforms the RAML method on the two measurements. 
This result verifies the feasibility of our RAML-HOT method and demonstrates its advantages claimed in Section~\ref{ssec:analysis} --- the reward used in (\ref{eq:raml2}) is suitable for our problem, and leveraging the HOT distance works better than using the exponential pay-off distribution. 

For our RAML-HOT method, the batch size $B$ is a key hyperparameter. 
Generally, using a large batch size may improve learning results.
However, for our method, whose computational complexity is quadratic to the batch size, we need to carefully set the batch size to achieve a trade-off between performance and efficiency.
Figure~\ref{fig:batch} visualize the runtime per batch and the  $d_{\text{fgw}}(\hat{\theta},\theta)$ achieved by our method with respect to different batch sizes. 
We find that the $d_{\text{fgw}}(\hat{\theta},\theta)$ is relatively stable but the runtime increases quadratically with respect to the batch size. 
According to the result, we set $B=10$ in our experiments.

Besides the batch size, the maximum number of event types $V_{\max}$ is also significant. 
According to (\ref{eq:ghp}), for the event sequences generated by our GHP model, the expected number of their event types is $\mathbb{E}[V]=\frac{\hat{V}_{\max}}{2}$. 
In the training phase, the maximum number of event types used to learn the GHP model, denoted as $\hat{V}_{\max}$, may be different from the ground truth $V_{\max}$. 
Setting $\hat{V}_{\max}$ too large or too small may lead to the model misspecification problem.
As shown in Figure~\ref{fig:number}, the runtime of our method increases quadratically with respect to $\hat{V}_{\max}$, which verifies the computational complexity in Section~\ref{ssec:analysis}.
The best $d_{\text{fgw}}(\hat{\theta},\theta)$ is achieved when the $\hat{V}_{\max}=V_{\max}$. 
In practice, given a set of training sequences, we calculate the averaged number of event types per sequence, denoted as $\bar{V}$, and set $\hat{V}_{\max}=2\bar{V}$.

\subsection{Modeling sparse heterogeneous event sequences}
As aforementioned, our GHP model is suitable for modeling sparse heterogeneous event sequences. 
We demonstrate the usefulness of our GHP model on two representative real-world datasets. 
The first is the Linkedin dataset, which contains the job-hopping and promotion behaviors of 2,439 Linkedin users~\cite{xu2017learning}. 
The dataset has 3,730 kinds of jobs ($i.e.$, the event types).
However, most users seldom change their jobs, and each of their event sequences contains 1 - 6 events in general. 
The second is the MIMIC-III dataset. 
It contains 2,371 patients, each with more than two admissions in a hospital~\cite{johnson2016mimic}. 
The dataset covers 2,789 kinds of diseases, but each patient suffers from extremely few of them and has a limited number of admissions. 
Given these two datasets, we apply our RAML-HOT method to learn GHP models and compare the models with state-of-the-art point process models. 
Specifically, we consider six baselines: the classic Hawkes process (\textbf{HP})~\cite{zhou2013learning}, the time-varying Hawkes process (\textbf{TVHP})~\cite{xu2017learning}, the recurrent marked temporal point process (\textbf{RMTPP})~\cite{du2016recurrent}, the neural Hawkes process (\textbf{NHP})~\cite{mei2017neural}, the self-attentive Hawkes process (\textbf{SAHP})~\cite{zhang2020self}, and the transformer Hawkes process (\textbf{THP})~\cite{zuo2020transformer}. 
For our GHP model, we implement it to generate classic Hawkes processes ($i.e.$, \textbf{GHP}$_{\text{HP}}$) and extend it to generate time-varying Hawkes processes ($i.e.$, \textbf{GHP}$_{\text{TVHP}}$).

\begin{table}[t]
\centering
\begin{small}
    \caption{Comparisons on real-world data}\label{tab:real}
    \begin{tabular}{
    @{\hspace{3pt}}l@{\hspace{3pt}}|
    l@{\hspace{4pt}}l|
    l@{\hspace{4pt}}l}
    \hline\hline
    \multirow{2}{*}{Method} 
    &\multicolumn{2}{c|}{LinkedIn}
    &\multicolumn{2}{c}{MIMIC-III}\\
    \cline{2-5}
    &NLL &$d_{ot}(\widehat{\mathcal{N}},\mathcal{N})$
    &NLL &$d_{ot}(\widehat{\mathcal{N}},\mathcal{N})$\\
    \hline
    HP
    &144.45$_{\pm \text{20.70}}$ &9.29$_{\pm \text{1.38}}$
    &87.72$_{\pm \text{7.73}}$ &10.30$_{\pm \text{0.69}}$\\
    TVHP
    &113.82$_{\pm \text{8.34}}$ &8.66$_{\pm \text{1.57}}$
    &63.25$_{\pm \text{3.08}}$ &10.06$_{\pm \text{0.63}}$\\
    RMTPP
    &127.39$_{\pm \text{13.44}}$ &8.83$_{\pm \text{1.49}}$
    &82.46$_{\pm \text{6.18}}$ &11.76$_{\pm \text{0.54}}$\\
    NHP
    &52.58$_{\pm \text{14.52}}$ &7.47$_{\pm \text{1.26}}$
    &60.05$_{\pm \text{5.27}}$ &9.98$_{\pm \text{0.71}}$\\
    SAHP
    &38.91$_{\pm \text{10.33}}$ &7.09$_{\pm \text{0.80}}$
    &54.45$_{\pm \text{3.12}}$ &10.01$_{\pm \text{0.95}}$\\
    THP
    &30.64$_{\pm \text{7.03}}$ &6.44$_{\pm \text{0.61}}$
    &42.08$_{\pm \text{5.26}}$ &9.85$_{\pm \text{0.88}}$\\
    \hline
    GHP$_{\text{HP}}$
    &19.36$_{\pm \text{2.97}}$ &5.23$_{\pm \text{0.28}}$
    &33.79$_{\pm \text{6.54}}$ &9.36$_{\pm \text{2.45}}$\\
    GHP$_{\text{TVHP}}$
    &\textbf{17.55}$_{\pm \text{2.61}}$ &\textbf{4.71}$_{\pm \text{0.15}}$
    &\textbf{31.63}$_{\pm \text{5.83}}$ &\textbf{8.96}$_{\pm \text{2.29}}$\\
    \hline\hline
    \end{tabular}
\end{small}
\end{table}

For each dataset, we train the models above based on 80\% sequences and test them on the remaining 20\% sequences based on two measurements. 
Firstly, for each model we can simulate a set of event sequences and calculate their optimal transport distance to the testing set, $i.e.$, $d_{ot}(\widehat{\mathcal{N}},\mathcal{N})$.
Secondly, given the learned method, we can calculate the negative log-likelihood (NLL) of the testing sequences. 
When calculating $d_{ot}(\widehat{\mathcal{N}},\mathcal{N})$, our GHP models apply the HOT distance based on (\ref{eq:ot_tpp2},~\ref{eq:ot_cp}). 
The optimal transport $\bm{Q}^*=[q^*(\bm{N}_k, \bm{N}_l)]$ derived by (\ref{eq:ot_tpp2}) helps match the simulated sequences with the testing ones. 
For each pair of the sequence, the optimal transport $\bm{T}^*=[T^*_{uv}]$ derived by (\ref{eq:ot_cp}) indicates the correspondence between the event types of the testing sequence in the latent event type space, $i.e.$, the latent event type $\{x_1,...,x_{|\mathcal{V}|}\}\in\Omega$ for the real-world event types $\mathcal{V}$. 
For the $v$-th event type of the $l$-th testing sequence $\bm{N}_l$, we first estimate the probability that it matches with the $u$-th latent event type of the $k$-th generated sequence $\hat{\bm{N}}_k$ as $p(x_u^k|v) \propto T^*_{uv}q^*(\bm{N}_k, \bm{N}_l)$.
Then, we take $\{x_u^k\}$ as landmarks on $\Omega$ and approximate the probability density $p(x|v)$ by the kernel density estimation, $i.e.$, $p(x|v)=\frac{1}{Z}\sum_{u,k}p(x_u^k|v)\exp(-\frac{|x-x_u^k|^2}{2\sigma^2})$, where $Z$ is the normalizing constant and $\sigma$ is the bandwidth of the Gaussian kernel.
For each event type in the testing sequence, we select its latent event type corresponding to the largest $p(x|v)$, $i.e.$, $x^*=\max_x p(x|v)$.
Given the latent event types, we obtain the Hawkes process from our GHP model and calculate the NLL of the testing sequence. 
Table~\ref{tab:real} shows the performance of various models in $10$ trials. 
In particular, the baselines are learned as a single point process with a huge number of event types from sparse event sequences, which have a high risk of over-fitting.
Our GHP models, on the contrary, describe each sparse event sequence by a small point process sampled from an underlying graphon and learn the point processes jointly. 
As a result, we can find that our GHP models outperform the baselines consistently.

\begin{figure}[t]
    \centering
    \subfigure[Linkedin]{
    \includegraphics[height=4.6cm]{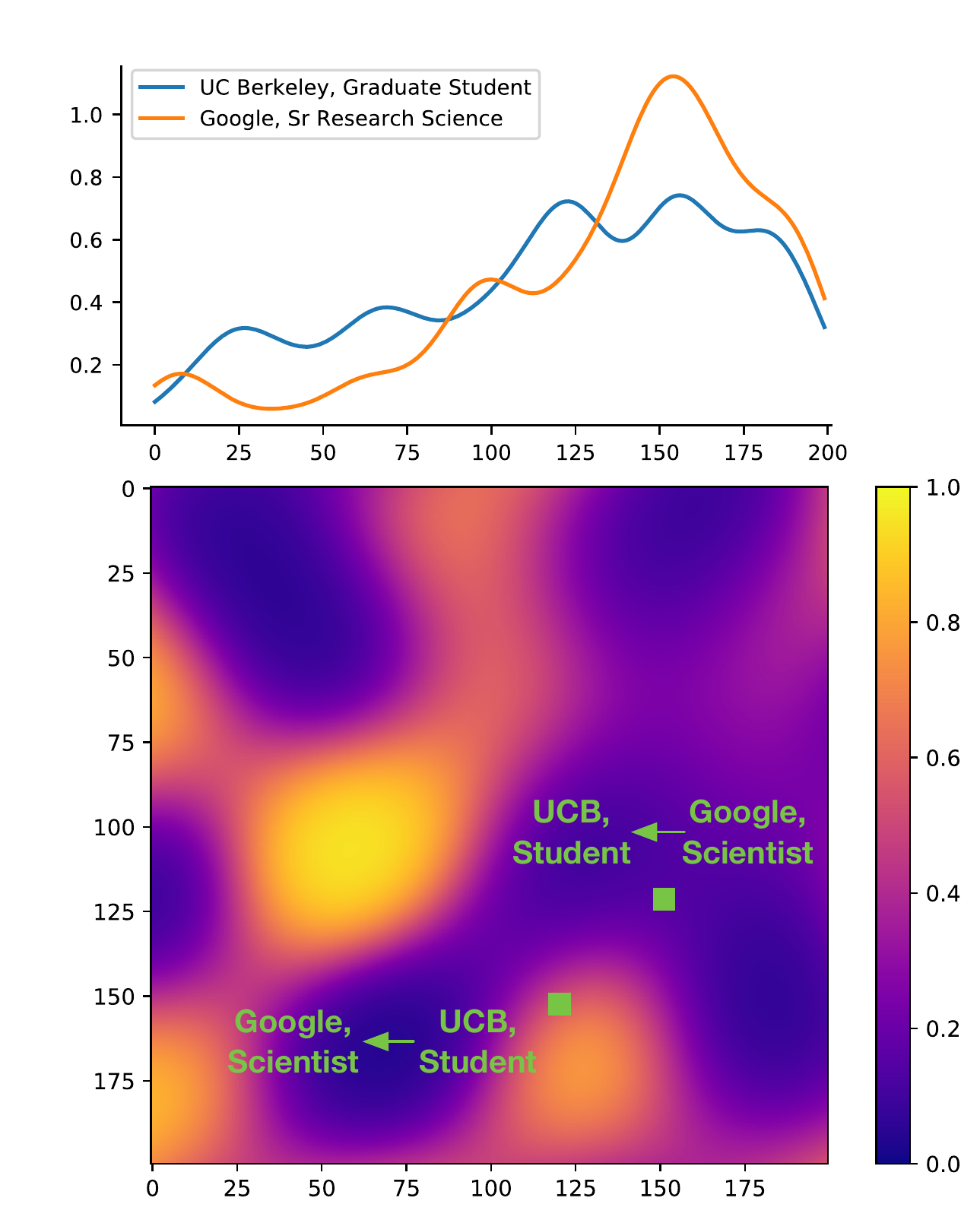}\label{fig:linkedin}
    }
    \subfigure[MIMIC-III]{
    \includegraphics[height=4.6cm]{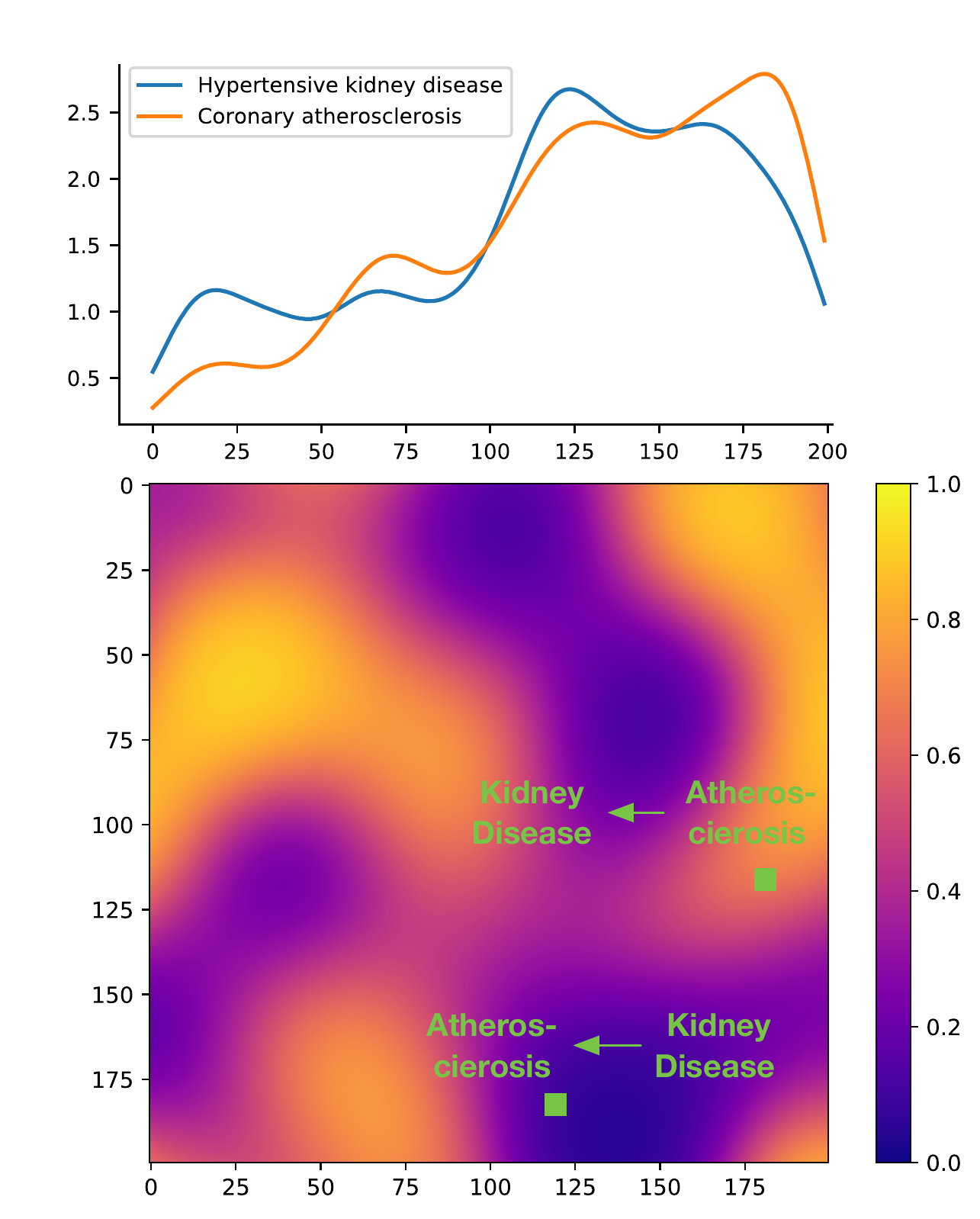}\label{fig:mimic3}
    }
    \vspace{-4mm}
    \caption{The graphons of the two real-world datasets.}
    \label{fig:real}
\end{figure}

In Figure~\ref{fig:real}, we show the probability densities of some representative real-world event types in the latent space and check their triggering patterns on the graphons. 
The graphons are visualized the resolution $200\times 200$. 
For the Linkedin dataset, we find the pairs of ``UCB, graduate student'' and ``Google, research scientist'' in the graphon according to their probability density. 
The values of the pairs indicate that a graduate student at UC Berkeley is likely to be a researcher at Google, while a researcher at Google may not be go back to school, which we think is reasonable in practice.
For the MIMIC-III dataset, we find the pairs of ``Hypertensive kidney disease'' and ``Atherosclerosis'' in the graphon. 
The values of the pairs reflect the following facts: if a patient has atherosclerosis, his risk of having the kidney disease caused by hypertensive will increase; however, the reasons for the hypertensive kidney disease are complicated, and a patient having this disease may not have atherosclerosis. 

\section{Conclusions}
In this work, we propose a graphon-based Hawkes process model, capturing the generative mechanism of multiple point processes with graph structures from heterogeneous event sequences. 
Our GHP model is a new member of hierarchical generative models for event sequence analysis. 
To our knowledge, it makes the first attempt to combine graphon models with point processes. 
In the future, we will improve GHP and its learning algorithm, $e.g.$, developing efficient algorithms to compute the HOT distance with lower complexity and building the GHP model based on deep neural networks.
Additionally, the HOT distance used in our model provides a potential solution to align event types of heterogeneous event sequences. 
Combining GHP with existing event sequence alignment methods~\cite{xu2018learning,trouleau2019learning,luo2019fused}, we plan to develop a new framework of data {\it fusion} and {\it augmentation} for large-scale point processes.

\bibliography{ihg}

\begin{thebibliography}{61}
\providecommand{\natexlab}[1]{#1}
\providecommand{\url}[1]{\texttt{#1}}
\expandafter\ifx\csname urlstyle\endcsname\relax
  \providecommand{\doi}[1]{doi: #1}\else
  \providecommand{\doi}{doi: \begingroup \urlstyle{rm}\Url}\fi

\bibitem[Airoldi et~al.(2013)Airoldi, Costa, and Chan]{airoldi2013stochastic}
Airoldi, E.~M., Costa, T.~B., and Chan, S.~H.
\newblock Stochastic blockmodel approximation of a graphon: Theory and
  consistent estimation.
\newblock In \emph{Advances in Neural Information Processing Systems}, pp.\
  692--700, 2013.

\bibitem[Alaa et~al.(2017)Alaa, Hu, and Schaar]{alaa2017learning}
Alaa, A.~M., Hu, S., and Schaar, M.
\newblock Learning from clinical judgments: Semi-markov-modulated marked hawkes
  processes for risk prognosis.
\newblock In \emph{International Conference on Machine Learning}, pp.\  60--69,
  2017.

\bibitem[Altschuler et~al.(2017)Altschuler, Weed, and
  Rigollet]{altschuler2017near}
Altschuler, J., Weed, J., and Rigollet, P.
\newblock Near-linear time approximation algorithms for optimal transport via
  {S}inkhorn iteration.
\newblock In \emph{Advances in Neural Information Processing Systems}, pp.\
  1964--1974, 2017.

\bibitem[Arjovsky et~al.(2017)Arjovsky, Chintala, and
  Bottou]{arjovsky2017wasserstein}
Arjovsky, M., Chintala, S., and Bottou, L.
\newblock Wasserstein {GAN}.
\newblock \emph{arXiv preprint arXiv:1701.07875}, 2017.

\bibitem[Bacry et~al.(2015)Bacry, Mastromatteo, and Muzy]{bacry2015hawkes}
Bacry, E., Mastromatteo, I., and Muzy, J.-F.
\newblock Hawkes processes in finance.
\newblock \emph{Market Microstructure and Liquidity}, 1\penalty0 (01):\penalty0
  1550005, 2015.

\bibitem[Blundell et~al.(2012)Blundell, Beck, and
  Heller]{blundell2012modelling}
Blundell, C., Beck, J., and Heller, K.~A.
\newblock Modelling reciprocating relationships with hawkes processes.
\newblock In \emph{Advances in Neural Information Processing Systems}, pp.\
  2600--2608, 2012.

\bibitem[Boissard et~al.(2015)Boissard, Le~Gouic, Loubes,
  et~al.]{boissard2015distribution}
Boissard, E., Le~Gouic, T., Loubes, J.-M., et~al.
\newblock Distribution’s template estimate with {W}asserstein metrics.
\newblock \emph{Bernoulli}, 21\penalty0 (2):\penalty0 740--759, 2015.

\bibitem[Chan \& Airoldi(2014)Chan and Airoldi]{chan2014consistent}
Chan, S. and Airoldi, E.
\newblock A consistent histogram estimator for exchangeable graph models.
\newblock In \emph{International Conference on Machine Learning}, pp.\
  208--216, 2014.

\bibitem[Channarond et~al.(2012)Channarond, Daudin, Robin,
  et~al.]{channarond2012classification}
Channarond, A., Daudin, J.-J., Robin, S., et~al.
\newblock Classification and estimation in the stochastic blockmodel based on
  the empirical degrees.
\newblock \emph{Electronic Journal of Statistics}, 6:\penalty0 2574--2601,
  2012.

\bibitem[Chatterjee et~al.(2015)]{chatterjee2015matrix}
Chatterjee, S. et~al.
\newblock Matrix estimation by universal singular value thresholding.
\newblock \emph{The Annals of Statistics}, 43\penalty0 (1):\penalty0 177--214,
  2015.

\bibitem[Chiu et~al.(2013)Chiu, Stoyan, Kendall, and Mecke]{chiu2013stochastic}
Chiu, S.~N., Stoyan, D., Kendall, W.~S., and Mecke, J.
\newblock \emph{Stochastic geometry and its applications}.
\newblock John Wiley \& Sons, 2013.

\bibitem[Courty et~al.(2017)Courty, Flamary, and Ducoffe]{courty2017learning}
Courty, N., Flamary, R., and Ducoffe, M.
\newblock Learning {W}asserstein embeddings.
\newblock \emph{arXiv preprint arXiv:1710.07457}, 2017.

\bibitem[Cuturi(2013)]{cuturi2013sinkhorn}
Cuturi, M.
\newblock Sinkhorn distances: Lightspeed computation of optimal transport.
\newblock In \emph{Advances in neural information processing systems}, pp.\
  2292--2300, 2013.

\bibitem[Du et~al.(2016)Du, Dai, Trivedi, Upadhyay, Gomez-Rodriguez, and
  Song]{du2016recurrent}
Du, N., Dai, H., Trivedi, R., Upadhyay, U., Gomez-Rodriguez, M., and Song, L.
\newblock Recurrent marked temporal point processes: Embedding event history to
  vector.
\newblock In \emph{Proceedings of the 22nd ACM SIGKDD International Conference
  on Knowledge Discovery and Data Mining}, pp.\  1555--1564, 2016.

\bibitem[Eichler et~al.(2017)Eichler, Dahlhaus, and
  Dueck]{eichler2017graphical}
Eichler, M., Dahlhaus, R., and Dueck, J.
\newblock Graphical modeling for multivariate hawkes processes with
  nonparametric link functions.
\newblock \emph{Journal of Time Series Analysis}, 38\penalty0 (2):\penalty0
  225--242, 2017.

\bibitem[Farajtabar et~al.(2017)Farajtabar, Wang, Gomez-Rodriguez, Li, Zha, and
  Song]{farajtabar2017coevolve}
Farajtabar, M., Wang, Y., Gomez-Rodriguez, M., Li, S., Zha, H., and Song, L.
\newblock Coevolve: A joint point process model for information diffusion and
  network evolution.
\newblock \emph{The Journal of Machine Learning Research}, 18\penalty0
  (1):\penalty0 1305--1353, 2017.

\bibitem[Frieze \& Kannan(1999)Frieze and Kannan]{frieze1999quick}
Frieze, A. and Kannan, R.
\newblock Quick approximation to matrices and applications.
\newblock \emph{Combinatorica}, 19\penalty0 (2):\penalty0 175--220, 1999.

\bibitem[Gao \& Caines(2019)Gao and Caines]{gao2019graphon}
Gao, S. and Caines, P.~E.
\newblock Graphon control of large-scale networks of linear systems.
\newblock \emph{IEEE Transactions on Automatic Control}, 2019.

\bibitem[Hawkes(1971)]{hawkes1971spectra}
Hawkes, A.~G.
\newblock Spectra of some self-exciting and mutually exciting point processes.
\newblock \emph{Biometrika}, pp.\  83--90, 1971.

\bibitem[Johnson et~al.(2016)Johnson, Pollard, Shen, Li-wei, Feng, Ghassemi,
  Moody, Szolovits, Celi, and Mark]{johnson2016mimic}
Johnson, A.~E., Pollard, T.~J., Shen, L., Li-wei, H.~L., Feng, M., Ghassemi,
  M., Moody, B., Szolovits, P., Celi, L.~A., and Mark, R.~G.
\newblock {MIMIC-III}, a freely accessible critical care database.
\newblock \emph{Scientific data}, 3:\penalty0 160035, 2016.

\bibitem[Keshavan et~al.(2010)Keshavan, Montanari, and Oh]{keshavan2010matrix}
Keshavan, R.~H., Montanari, A., and Oh, S.
\newblock Matrix completion from a few entries.
\newblock \emph{IEEE transactions on information theory}, 56\penalty0
  (6):\penalty0 2980--2998, 2010.

\bibitem[Kingma \& Ba(2014)Kingma and Ba]{kingma2014adam}
Kingma, D.~P. and Ba, J.
\newblock Adam: A method for stochastic optimization.
\newblock \emph{arXiv preprint arXiv:1412.6980}, 2014.

\bibitem[Kolouri et~al.(2018)Kolouri, Pope, Martin, and
  Rohde]{kolouri2018sliced}
Kolouri, S., Pope, P.~E., Martin, C.~E., and Rohde, G.~K.
\newblock Sliced-wasserstein autoencoder: an embarrassingly simple generative
  model.
\newblock \emph{arXiv preprint arXiv:1804.01947}, 2018.

\bibitem[Lee et~al.(2019)Lee, Dabagia, Dyer, and Rozell]{lee2019hierarchical}
Lee, J., Dabagia, M., Dyer, E., and Rozell, C.
\newblock Hierarchical optimal transport for multimodal distribution alignment.
\newblock In \emph{Advances in Neural Information Processing Systems}, pp.\
  13453--13463, 2019.

\bibitem[Li et~al.(2018)Li, Xiao, Zhu, Du, Xie, and Song]{li2018learning}
Li, S., Xiao, S., Zhu, S., Du, N., Xie, Y., and Song, L.
\newblock Learning temporal point processes via reinforcement learning.
\newblock In \emph{Advances in neural information processing systems}, pp.\
  10781--10791, 2018.

\bibitem[Lin et~al.(2016)Lin, Guo, Wang, Chen, et~al.]{lin2016infinite}
Lin, P., Guo, T., Wang, Y., Chen, F., et~al.
\newblock Infinite hidden semi-markov modulated interaction point process.
\newblock \emph{Advances in Neural Information Processing Systems},
  29:\penalty0 3900--3908, 2016.

\bibitem[Liniger(2009)]{liniger2009multivariate}
Liniger, T.~J.
\newblock \emph{Multivariate hawkes processes}.
\newblock PhD thesis, ETH Zurich, 2009.

\bibitem[Lov{\'a}sz(2012)]{lovasz2012large}
Lov{\'a}sz, L.
\newblock \emph{Large networks and graph limits}, volume~60.
\newblock American Mathematical Soc., 2012.

\bibitem[Luo et~al.(2015)Luo, Xu, Zhen, Ning, Zha, Yang, and
  Zhang]{luo2015multi}
Luo, D., Xu, H., Zhen, Y., Ning, X., Zha, H., Yang, X., and Zhang, W.
\newblock Multi-task multi-dimensional hawkes processes for modeling event
  sequences.
\newblock 2015.

\bibitem[Luo et~al.(2019)Luo, Xu, and Carin]{luo2019fused}
Luo, D., Xu, H., and Carin, L.
\newblock Fused gromov-wasserstein alignment for hawkes processes.
\newblock \emph{arXiv preprint arXiv:1910.02096}, 2019.

\bibitem[Mei \& Eisner(2017)Mei and Eisner]{mei2017neural}
Mei, H. and Eisner, J.~M.
\newblock The neural hawkes process: A neurally self-modulating multivariate
  point process.
\newblock \emph{Advances in Neural Information Processing Systems},
  30:\penalty0 6754--6764, 2017.

\bibitem[Mei et~al.(2020)Mei, Wan, and Eisner]{mei2020noise}
Mei, H., Wan, T., and Eisner, J.
\newblock Noise-contrastive estimation for multivariate point processes.
\newblock \emph{arXiv preprint arXiv:2011.00717}, 2020.

\bibitem[M{\'e}moli(2011)]{memoli2011gromov}
M{\'e}moli, F.
\newblock Gromov-{W}asserstein distances and the metric approach to object
  matching.
\newblock \emph{Foundations of computational mathematics}, 11\penalty0
  (4):\penalty0 417--487, 2011.

\bibitem[M{\o}ller \& Rasmussen(2006)M{\o}ller and
  Rasmussen]{moller2006approximate}
M{\o}ller, J. and Rasmussen, J.~G.
\newblock Approximate simulation of hawkes processes.
\newblock \emph{Methodology and Computing in Applied Probability}, 8\penalty0
  (1):\penalty0 53--64, 2006.

\bibitem[Norouzi et~al.(2016)Norouzi, Bengio, Jaitly, Schuster, Wu, Schuurmans,
  et~al.]{norouzi2016reward}
Norouzi, M., Bengio, S., Jaitly, N., Schuster, M., Wu, Y., Schuurmans, D.,
  et~al.
\newblock Reward augmented maximum likelihood for neural structured prediction.
\newblock In \emph{Advances In Neural Information Processing Systems}, pp.\
  1723--1731, 2016.

\bibitem[Ogata(1981)]{ogata1981lewis}
Ogata, Y.
\newblock On lewis' simulation method for point processes.
\newblock \emph{IEEE transactions on information theory}, 27\penalty0
  (1):\penalty0 23--31, 1981.

\bibitem[Peyr{\'e} et~al.(2019)Peyr{\'e}, Cuturi,
  et~al.]{peyre2019computational}
Peyr{\'e}, G., Cuturi, M., et~al.
\newblock Computational optimal transport.
\newblock \emph{Foundations and Trends{\textregistered} in Machine Learning},
  11\penalty0 (5-6):\penalty0 355--607, 2019.

\bibitem[Rabin et~al.(2011)Rabin, Peyr{\'e}, Delon, and
  Bernot]{rabin2011wasserstein}
Rabin, J., Peyr{\'e}, G., Delon, J., and Bernot, M.
\newblock Wasserstein barycenter and its application to texture mixing.
\newblock In \emph{International Conference on Scale Space and Variational
  Methods in Computer Vision}, pp.\  435--446. Springer, 2011.

\bibitem[Tank et~al.(2018)Tank, Covert, Foti, Shojaie, and Fox]{tank2018neural}
Tank, A., Covert, I., Foti, N., Shojaie, A., and Fox, E.
\newblock Neural granger causality for nonlinear time series.
\newblock \emph{arXiv preprint arXiv:1802.05842}, 2018.

\bibitem[Trouleau et~al.(2019)Trouleau, Etesami, Grossglauser, Kiyavash, and
  Thiran]{trouleau2019learning}
Trouleau, W., Etesami, J., Grossglauser, M., Kiyavash, N., and Thiran, P.
\newblock Learning hawkes processes under synchronization noise.
\newblock In \emph{International Conference on Machine Learning}, pp.\
  6325--6334. PMLR, 2019.

\bibitem[Van~Loan \& Golub(1983)Van~Loan and Golub]{van1983matrix}
Van~Loan, C.~F. and Golub, G.~H.
\newblock \emph{Matrix computations}.
\newblock Johns Hopkins University Press Baltimore, 1983.

\bibitem[Vayer et~al.(2018)Vayer, Chapel, Flamary, Tavenard, and
  Courty]{vayer2018fused}
Vayer, T., Chapel, L., Flamary, R., Tavenard, R., and Courty, N.
\newblock Fused {G}romov-{W}asserstein distance for structured objects:
  theoretical foundations and mathematical properties.
\newblock \emph{arXiv preprint arXiv:1811.02834}, 2018.

\bibitem[Villani(2008)]{villani2008optimal}
Villani, C.
\newblock \emph{Optimal transport: old and new}, volume 338.
\newblock Springer Science \& Business Media, 2008.

\bibitem[Wang \& Banerjee(2014)Wang and Banerjee]{wang2014bregman}
Wang, H. and Banerjee, A.
\newblock Bregman alternating direction method of multipliers.
\newblock In \emph{NeurIPS}, 2014.

\bibitem[Wang et~al.(2016)Wang, Du, Trivedi, and Song]{wang2016coevolutionary}
Wang, Y., Du, N., Trivedi, R., and Song, L.
\newblock Coevolutionary latent feature processes for continuous-time user-item
  interactions.
\newblock \emph{Advances in neural information processing systems},
  29:\penalty0 4547--4555, 2016.

\bibitem[Xiao et~al.(2017)Xiao, Farajtabar, Ye, Yan, Song, and
  Zha]{xiao2017wasserstein}
Xiao, S., Farajtabar, M., Ye, X., Yan, J., Song, L., and Zha, H.
\newblock Wasserstein learning of deep generative point process models.
\newblock In \emph{Advances in neural information processing systems}, pp.\
  3247--3257, 2017.

\bibitem[Xie et~al.(2020)Xie, Wang, Wang, and Zha]{xie2020fast}
Xie, Y., Wang, X., Wang, R., and Zha, H.
\newblock A fast proximal point method for computing exact {W}asserstein
  distance.
\newblock In \emph{Uncertainty in Artificial Intelligence}, pp.\  433--453.
  PMLR, 2020.

\bibitem[Xu \& Zha(2017)Xu and Zha]{xu2017dirichlet}
Xu, H. and Zha, H.
\newblock A dirichlet mixture model of hawkes processes for event sequence
  clustering.
\newblock \emph{Advances in Neural Information Processing Systems},
  30:\penalty0 1354--1363, 2017.

\bibitem[Xu et~al.(2016{\natexlab{a}})Xu, Farajtabar, and Zha]{xu2016learning}
Xu, H., Farajtabar, M., and Zha, H.
\newblock Learning granger causality for hawkes processes.
\newblock In \emph{International conference on machine learning}, pp.\
  1717--1726, 2016{\natexlab{a}}.

\bibitem[Xu et~al.(2016{\natexlab{b}})Xu, Wu, Nemati, and Zha]{xu2016patient}
Xu, H., Wu, W., Nemati, S., and Zha, H.
\newblock Patient flow prediction via discriminative learning of
  mutually-correcting processes.
\newblock \emph{IEEE transactions on Knowledge and Data Engineering},
  29\penalty0 (1):\penalty0 157--171, 2016{\natexlab{b}}.

\bibitem[Xu et~al.(2017)Xu, Luo, and Zha]{xu2017learning}
Xu, H., Luo, D., and Zha, H.
\newblock Learning hawkes processes from short doubly-censored event sequences.
\newblock In \emph{International Conference on Machine Learning}, pp.\
  3831--3840, 2017.

\bibitem[Xu et~al.(2018)Xu, Carin, and Zha]{xu2018learning}
Xu, H., Carin, L., and Zha, H.
\newblock Learning registered point processes from idiosyncratic observations.
\newblock In \emph{International Conference on Machine Learning}, pp.\
  5443--5452. PMLR, 2018.

\bibitem[Xu et~al.(2019)Xu, Luo, Zha, and Carin]{xu2019gromov}
Xu, H., Luo, D., Zha, H., and Carin, L.
\newblock Gromov-{W}asserstein learning for graph matching and node embedding.
\newblock In \emph{International Conference on Machine Learning}, pp.\
  6932--6941, 2019.

\bibitem[Xu et~al.(2020)Xu, Luo, Carin, and Zha]{xu2021learning}
Xu, H., Luo, D., Carin, L., and Zha, H.
\newblock Learning graphons via structured gromov-wasserstein barycenters.
\newblock \emph{arXiv preprint arXiv:2012.05644}, 2020.

\bibitem[Xu(2018)]{xu2018rates}
Xu, J.
\newblock Rates of convergence of spectral methods for graphon estimation.
\newblock In \emph{International Conference on Machine Learning}, pp.\
  5433--5442, 2018.

\bibitem[Yurochkin et~al.(2019)Yurochkin, Claici, Chien, Mirzazadeh, and
  Solomon]{yurochkin2019hierarchical}
Yurochkin, M., Claici, S., Chien, E., Mirzazadeh, F., and Solomon, J.
\newblock Hierarchical optimal transport for document representation.
\newblock \emph{arXiv preprint arXiv:1906.10827}, 2019.

\bibitem[Zhang et~al.(2020)Zhang, Lipani, Kirnap, and Yilmaz]{zhang2020self}
Zhang, Q., Lipani, A., Kirnap, O., and Yilmaz, E.
\newblock Self-attentive hawkes process.
\newblock In \emph{International Conference on Machine Learning}, pp.\
  11183--11193. PMLR, 2020.

\bibitem[Zhao et~al.(2015)Zhao, Erdogdu, He, Rajaraman, and
  Leskovec]{zhao2015seismic}
Zhao, Q., Erdogdu, M.~A., He, H.~Y., Rajaraman, A., and Leskovec, J.
\newblock Seismic: A self-exciting point process model for predicting tweet
  popularity.
\newblock In \emph{Proceedings of the 21th ACM SIGKDD international conference
  on knowledge discovery and data mining}, pp.\  1513--1522, 2015.

\bibitem[Zhou et~al.(2013)Zhou, Zha, and Song]{zhou2013learning}
Zhou, K., Zha, H., and Song, L.
\newblock Learning triggering kernels for multi-dimensional hawkes processes.
\newblock In \emph{International Conference on Machine Learning}, pp.\
  1301--1309, 2013.

\bibitem[Zhu(2013)]{zhu2013nonlinear}
Zhu, L.
\newblock \emph{Nonlinear Hawkes Processes}.
\newblock PhD thesis, New York University, 2013.

\bibitem[Zuo et~al.(2020)Zuo, Jiang, Li, Zhao, and Zha]{zuo2020transformer}
Zuo, S., Jiang, H., Li, Z., Zhao, T., and Zha, H.
\newblock Transformer hawkes process.
\newblock \emph{arXiv preprint arXiv:2002.09291}, 2020.

\end{thebibliography}
\bibliographystyle{icml2021}

\newpage
\onecolumn
\appendix
\section{The Properties of Our GHP model}\label{app:A}
\subsection{The proof of Property~\ref{prop:stationary}}
\textbf{Property~\ref{prop:stationary}}
\emph{$\text{HP}_\mathcal{V}(\bm{\mu},\bm{A})\sim \text{GHP}_\Omega(f, g)$ is asymptotically stationary as long as $|\mathcal{V}|\leq V_{\max}$.} 

\begin{proof}
For a classical shift-invariant Hawkes process, its intensity function is
\begin{eqnarray}\label{eq:hp}
\begin{aligned}
\lambda_v(t)=\mu_v + \sideset{}{_{t_i<t}}\sum \phi_{vv'}(t - t_i)
=\mu_v +\sideset{}{_{t_i<t}}\sum a_{vv'}\eta(t - t_i)~\text{for}~v\in\mathcal{V}.
\end{aligned}
\end{eqnarray}
We can construct a matrix $\bm{\Phi}=[\phi_{vv'}]\in\mathbb{R}^{V\times V}$, whose element is $\phi_{vv'}=\int_{0}^{\infty}\phi_{vv'}(t)\text{d}t$. 
According to the Proposition~1 in~\cite{bacry2015hawkes}, the Hawkes process is asymptotically stationary if the impact functions satisfy: 
\begin{eqnarray}\label{eq:stable_cond}
\begin{aligned}
1)~\forall~v,v'\in\mathcal{V},~\phi_{vv'}(t)
\begin{cases}
\geq 0, &t\geq 0,\\
= 0, &t< 0.
\end{cases}
\quad 2)~\|\bm{\Phi}\|_2 < 1.
\end{aligned}
\end{eqnarray}

When setting $\phi_{vv'}(t)=a_{vv'}\kappa(t)$, as shown in (\ref{eq:hp}), we have 
\begin{eqnarray}\label{eq:phi_a}
\begin{aligned}
\bm{\Phi}=D\bm{A},~\text{where}~D=\int_{0}^{\infty}\eta(t)\text{d}t.
\end{aligned}
\end{eqnarray}
In the generative process of our GHP model, we have
\begin{eqnarray}
\begin{aligned}
a_{vv'}=\frac{1}{V_{\max} D}g(x_v, x_{v'}),~\text{where}~V_{\max}\geq V~\text{and}~g:\Omega\mapsto [0, 1).
\end{aligned}
\end{eqnarray}
Accordingly, we have
\begin{eqnarray}\label{ineq:a}
\begin{aligned}
\|\bm{A}\|_2\leq \|\bm{A}\|_F = \Bigl(\sideset{}{_{v,v'}}\sum a_{vv'}^2\Bigr)^{\frac{1}{2}} = \frac{1}{V_{\max} D}\Bigl(\sideset{}{_{v,v'}}\sum g^2(x_v,x_{v'})\Bigr)^{\frac{1}{2}}< \frac{V}{V_{\max} D}\leq \frac{1}{D}.
\end{aligned}
\end{eqnarray}
Here, the first inequality is based on the relationship between a matrix's spectral norm and its Frobenius norm. 
The second (strict) inequality is based on Assumption~\ref{assum:lip} ($i.e.$, $0\leq g(x, y)<1$ $\forall~(x,y)\in\Omega^2$). 
Plugging (\ref{ineq:a}) into (\ref{eq:phi_a}), we have $\|\bm{\Phi}\|_2<1$, thus the stability condition in (\ref{eq:stable_cond}) is satisfied.
\end{proof}

\subsection{The proof of Property~\ref{prop:lip}}~\label{app:A2}
Before proving Property~\ref{prop:lip}, we first introduce the definition of the discrete Wasserstein distance (the earth mover's distance) and that of the discrete Gromov-Wasserstein distance.
\begin{definition}[Earth Mover's Distance]\label{def:emd}
Given $\bm{a}=\{\bm{a}_m\in\mathbb{R}^D\}_{m=1}^{M}$ and $\bm{b}=\{\bm{b}_n\in\mathbb{R}^{D}\}_{n=1}^{N}$, the discrete Wasserstein distance between them is
\begin{eqnarray}\label{eq:emd}
\begin{aligned}
d_{\text{w}}(\bm{a},\bm{b}):=\sideset{}{_{\bm{T}\in\Pi\left(\frac{1}{M}\bm{1}_M,\frac{1}{N}\bm{1}_N\right)}}\min\Bigl(\sideset{}{_{m,n}}\sum t_{mn}\|\bm{a}_m-\bm{b}_n\|_2^2\Bigr)^{\frac{1}{2}}
=\sideset{}{_{\bm{T}\in\Pi\left(\frac{1}{M}\bm{1}_M,\frac{1}{N}\bm{1}_N\right)}}\min\langle \bm{D},\bm{T}\rangle^{\frac{1}{2}}.
\end{aligned}
\end{eqnarray}
Here, $\bm{T}=[t_{mn}]$ is a doubly stochastic matrix in the set 
\[\Pi\left(\frac{1}{M}\bm{1}_M,\frac{1}{N}\bm{1}_N\right)=\left\{\bm{T}=[t_{mn}] \; \bigg| \; t_{mn}\geq 0, \bm{T}\bm{1}_N=\frac{1}{M}\bm{1}_M,\bm{T}^{\top}\bm{1}_M=\frac{1}{N}\bm{1}_N\right\},
\]
where $\bm{1}_N$ is the $N$-dimensional all-one vector. 
$\bm{D}=[d_{mn}]$ is a distance matrix, whose element $d_{mn}=\|\bm{a}_m-\bm{b}_n\|_2^2$. 
The optimal $\bm{T}$ corresponding to the distance, $i.e.$, $\bm{T}^*=\arg\min_{\bm{T}\in\Pi(\frac{1}{M}\bm{1}_M,\frac{1}{N}\bm{1}_N)}\langle \bm{D},\bm{T}\rangle^{\frac{1}{2}}$, is the so-called optimal transport matrix.
\end{definition}

\begin{definition}[Discrete Gromov-Wasserstein Distance]\label{def:dgwd}
Given $\bm{a}=\{\bm{a}_m\in\mathbb{R}^D\}_{m=1}^{M}$ and $\bm{b}=\{\bm{b}_n\in\mathbb{R}^{D'}\}_{n=1}^{N}$, where $D$ can be different from $D'$, the discrete Gromov-Wasserstein distance between them is
\begin{eqnarray}\label{eq:gwd1}
\begin{aligned}
d_{\text{gw}}(\bm{a},\bm{b}):=\sideset{}{_{\bm{T}\in\Pi(\frac{1}{M}\bm{1}_M,\frac{1}{N}\bm{1}_N)}}\min\Bigl(\sideset{}{_{m,m',n,n'}}\sum t_{mn}t_{m'n'}|\|\bm{a}_{m}-\bm{a}_{m'}\|_2-\|\bm{b}_{n}-\bm{b}_{n'}\|_2 |^2\Bigr)^{\frac{1}{2}}.
\end{aligned}
\end{eqnarray}
Similar to the discrete Wasserstein distance, $\bm{T}=[t_{mn}]$ is a doubly stochastic matrix in the set $\Pi\left(\frac{1}{M}\bm{1}_M,\frac{1}{N}\bm{1}_N\right)$ and the optimal $\bm{T}$ corresponding to the distance is the optimal transport matrix. 
When two distance matrices $\bm{A}=\{a_{mm'}\}_{m,m'=1}^{M}$ and $\bm{B}=\{b_{nn'}\}_{n,n'=1}^{N}$ are provided directly, where $a_{mm'}=\|\bm{a}_m-\bm{a}_{m'}\|_2$ and $b_{nn'}=\|\bm{b}_{n}-\bm{b}_{n'}\|_2$, we can rewrite the discrete Gromov-Wasserstein distance equivalently as
\begin{eqnarray}\label{eq:gwd2}
\begin{aligned}
d_{\text{gw}}(\bm{A},\bm{B}):=\sideset{}{_{\bm{T}\in\Pi\left(\frac{1}{M}\bm{1}_M,\frac{1}{N}\bm{1}_N\right)}}\min\Bigl(\sideset{}{_{m,m',n,n'}}\sum t_{mn}t_{m'n'}|a_{mm'}-b_{nn'}|^2\Bigr)^{\frac{1}{2}}.
\end{aligned}
\end{eqnarray}
\end{definition}

\textbf{Property~\ref{prop:lip}}
\emph{For $\text{HP}_{\mathcal{V}}(\bm{\mu}_1,\bm{A}_1)$ and $\text{HP}_{\mathcal{U}}(\bm{\mu}_2,\bm{A}_2)\sim \text{GHP}_{\Omega}(f, g)$, where $\text{GHP}_{\Omega}(f, g)$ satisfies Assumption~\ref{assum:lip}, their parameters satisfy
\begin{eqnarray*}
\begin{aligned}
&C_{1}^f d_{\text{w}}(\bm{x}_1,\bm{x}_2)\leq d_{\text{w}}(\bm{\mu}_1,\bm{\mu}_2)\leq C_{2}^f d_{\text{w}}(\bm{x}_1,\bm{x}_2),\\
&d_{\text{w}}(\bm{A}_1,\bm{A}_2)\leq C^g d_{\text{w}}(\bm{x}_1^{\times},\bm{x}_2^{\times}),\\
&d_{\text{gw}}(\bm{A}_1,\bm{A}_2)\leq C^g d_{\text{gw}}(\bm{x}_1^{\times},\bm{x}_2^{\times}),
\end{aligned}
\end{eqnarray*}
where $\bm{x}_1=\{x_{v,1}\}_{v=1}^{|\mathcal{V}|}$ and $\bm{x}_2=\{x_{u,2}\}_{u=1}^{|\mathcal{U}|}$ are the latent event types, and $\bm{x}_1^{\times}=\{[x_{v,1};x_{v',1}]\}_{v,v'=1}^{|\mathcal{V}|}$ and $\bm{x}_2^{\times}=\{[x_{u,2};x_{u',2}]\}_{u,u'=1}^{|\mathcal{U}|}$ enumerate the pairs of the latent event types.}

\begin{proof}
Denote $|\mathcal{V}|=V$ and $|\mathcal{U}|=U$. 
Denote $\bm{T}^{x}$ as the optimal transport matrix corresponding to $d_{\text{w}}(\bm{x}_1,\bm{x}_2)$, $i.e.$, $\bm{T}^{x}=\arg\min_{\bm{T}\in\Pi\left(\frac{1}{V}\bm{1}_V,\frac{1}{U}\bm{1}_U\right)}(\sum_{v,u}t_{vu}|x_{v,1} - x_{u,2}|^2)^{\frac{1}{2}}=\arg\min_{\bm{T}\Pi\left(\frac{1}{V}\bm{1}_V,\frac{1}{U}\bm{1}_U\right)}\langle\bm{D}_x,\bm{T}\rangle^{\frac{1}{2}}$, where $\bm{D}_x=[d^x_{vu}]$ and $d^x_{vu}=|x_{v,1} - x_{u,2}|^2$. 
Similarly, denote $\bm{T}^{\mu}$ as the optimal transport matrix corresponding to $d_{\text{w}}(\bm{\mu}_1,\bm{\mu}_2)$, $i.e.$, $d_{\text{w}}(\bm{\mu}_1,\bm{\mu}_2)=\langle\bm{D}_{\mu},\bm{T}^{\mu}\rangle$, where $\bm{D}_{\mu}=[d^{\mu}_{vu}]$ and $d^{\mu}_{vu}=|\mu_{v,1} - \mu_{u,2}|^2$.
Obviously, we have
\begin{eqnarray}\label{eq:cond_t}
\forall~\bm{T}\in \Pi\left(\frac{1}{V}\bm{1}_V,\frac{1}{U}\bm{1}_U\right),\quad
\langle\bm{D}_x,\bm{T}\rangle^{\frac{1}{2}}\geq \langle\bm{D}_x,\bm{T}^x\rangle^{\frac{1}{2}},\quad\text{and}\quad
\langle\bm{D}_{\mu},\bm{T}\rangle^{\frac{1}{2}}\geq \langle\bm{D}_{\mu},\bm{T}^{\mu}\rangle^{\frac{1}{2}}.
\end{eqnarray}
Based on (\ref{eq:cond_t}), we have
\begin{eqnarray}\label{eq:lip_mu}
\begin{aligned}
d_{\text{w}}(\bm{\mu}_1,\bm{\mu}_2)&=\Bigl(\sideset{}{_{m,n}}\sum t_{vu}^{\mu}|\mu_{v,1} - \mu_{u,2}|^2\Bigr)^{\frac{1}{2}}\leq \Bigl(\sideset{}{_{v,u}}\sum t_{vu}^{x}|\mu_{v,1} - \mu_{u,2}|^2\Bigr)^{\frac{1}{2}}\\
&\leq C_2^f\Bigl(\sideset{}{_{v,u}}\sum t_{vu}^{x}|x_{v,1} - x_{u,2}|^2\Bigr)^{\frac{1}{2}} = C_2^f d_{\text{w}}(\bm{x}_1,\bm{x}_2).\\
d_{\text{w}}(\bm{\mu}_1,\bm{\mu}_2)&=\Bigl(\sideset{}{_{v,u}}\sum t_{vu}^{\mu}|\mu_{v,1} - \mu_{u,2}|^2\Bigr)^{\frac{1}{2}}\geq 
C_1^f\Bigl(\sideset{}{_{v,u}}\sum t_{vu}^{\mu}|x_{v,1} - x_{u,2}|^2\Bigr)^{\frac{1}{2}}\\
&\geq C_1^f\Bigl(\sideset{}{_{v,u}}\sum t_{vu}^{x}|x_{v,1} - x_{u,2}|^2\Bigr)^{\frac{1}{2}}=C_1^f d_{\text{w}}(\bm{x}_1,\bm{x}_2).
\end{aligned}
\end{eqnarray}

For $\bm{A}_1=[a_{vv'}^1]$ and $\bm{A}_2=[a_{uu'}^2]$, their discrete Wasserstein distance is
\begin{eqnarray}
\begin{aligned}
d_{\text{w}}(\bm{A}_1,\bm{A}_2)=\sideset{}{_{\bm{T}\in\Pi(\frac{1}{V^2}\bm{1}_{V^2},\frac{1}{U^2}\bm{1}_{U^2})}}\min\langle\bm{D}_A,\bm{T}\rangle^{\frac{1}{2}}=\langle \bm{D}_A, \bm{T}^A\rangle^{\frac{1}{2}}
\end{aligned}
\end{eqnarray}
where $\bm{D}_A=[d_{vv',uu'}]\in\mathbb{R}^{V^2\times U^2}$, $d_{vv',uu'}=|a_{vv'}^1 - a_{uu'}^2|^2$, and $\bm{T}^A$ is the optimal transport matrix. 
Their discrete Gromov-Wasserstein distance is
\begin{eqnarray}
\begin{aligned}
d_{\text{gw}}(\bm{A}_1,\bm{A}_2)=\sideset{}{_{\bm{T}\in\Pi(\frac{1}{V}\bm{1}_{V},\frac{1}{U}\bm{1}_{U})}}\min\langle\bm{D}_A,\bm{T}\otimes\bm{T}\rangle^{\frac{1}{2}}=\langle \bm{D}_A, \bm{T}_{\text{gw}}^A\otimes\bm{T}_{\text{gw}}^A\rangle^{\frac{1}{2}},
\end{aligned}
\end{eqnarray}
where $\bm{T}_{\text{gw}}^A$ is the optimal transport matrix and $\otimes$ represents the Kronecker multiplication between two matrices.

Similarly, we represent 
\[d_{\text{w}}(\bm{x}_1^{\times}, \bm{x}_2^{\times})=\min_{\bm{T}\in\Pi(\frac{1}{V^2}\bm{1}_{V^2},\frac{1}{U^2}\bm{1}_{U^2})}\langle\bm{D}_{X}, \bm{T}\rangle^{\frac{1}{2}}, \quad d_{\text{gw}}(\bm{x}_1^{\times}, \bm{x}_2^{\times})=\min_{\bm{T}\in\Pi(\frac{1}{V}\bm{1}_{V},\frac{1}{U}\bm{1}_{U})}\langle\bm{D}_{X}, \bm{T}\otimes\bm{T}\rangle^{\frac{1}{2}},\] respectively, where $\bm{D}_{X}=[D_{vv',uu'}]$ and $D_{vv',uu'}=\|[x_{v,1}; x_{v',1}] - [x_{u,2}; x_{u',2}]\|_2^2$. 
Accordingly, we denote $\bm{T}^X$ as the optimal transport matrix of $d_{\text{w}}(\bm{x}_1^{\times}, \bm{x}_2^{\times})$ and $\bm{T}_{\text{gw}}^X$ as the optimal transport matrix of $d_{\text{gw}}(\bm{x}_1^{\times}, \bm{x}_2^{\times})$.

Similar to the derivation shown in (\ref{eq:lip_mu}), we have
\begin{eqnarray}
\begin{aligned}
& \langle \underbrace{\bm{D}_A,\bm{T}^A\rangle^{\frac{1}{2}}}_{d_{\text{w}}(\bm{A}_1,\bm{A}_2)}\leq \langle \bm{D}_A,\bm{T}^X\rangle^{\frac{1}{2}}\leq C^g\underbrace{\langle\bm{D}_X,\bm{T}^X\rangle^{\frac{1}{2}}}_{d_{\text{w}}(\bm{x}_1^{\times}, \bm{x}_2^{\times})},\\
& \langle \underbrace{\bm{D}_A,\bm{T}_{\text{gw}}^A\otimes\bm{T}_{\text{gw}}^A\rangle^{\frac{1}{2}}}_{d_{\text{gw}}(\bm{A}_1,\bm{A}_2)}\leq \langle \bm{D}_A,\bm{T}_{\text{gw}}^X\otimes\bm{T}_{\text{gw}}^X\rangle^{\frac{1}{2}}\leq C^g\underbrace{\langle\bm{D}_X,\bm{T}_{\text{gw}}^X\otimes\bm{T}_{\text{gw}}^X\rangle^{\frac{1}{2}}}_{d_{\text{gw}}(\bm{x}_1^{\times}, \bm{x}_2^{\times})}.
\end{aligned}
\end{eqnarray}
\end{proof}

\subsection{The proof of Property~\ref{prop:error}}
\begin{definition}[Average Intensity~\cite{bacry2015hawkes}]\label{def:avg_lambda}
For the stationary Hawkes process defined in (\ref{eq:hp}), its counting process is denoted as $\bm{N}(t)=\{N_v(t)\}_{v\in\mathcal{V},t\in[0,T]}$, where $N_v(t)$ is the number of the type-$v$ events till time $t$, and its average intensity is
\begin{eqnarray}\label{eq:average}
\bar{\bm{\lambda}}:=\frac{\mathbb{E}[\text{d}\bm{N}(t)]}{\text{d}t}=(\bm{I}_V - \bm{\Phi})^{-1}\bm{\mu}=(\bm{I}_V - D\bm{A})^{-1}\bm{\mu}.
\end{eqnarray}
\end{definition}
According to Campbell's theorem~\cite{chiu2013stochastic}, given $\bar{\bm{\lambda}}=[\bar{\lambda_v}]$, we have
\begin{eqnarray}
\mathbb{E}[\text{d}N_v(t)]=T\bar{\lambda}_v=\int_{0}^{T}\lambda_v(t)\text{d}t,~\forall~v\in\mathcal{V}.
\end{eqnarray}
In other words, the average intensity reflects the overall dynamics of different event types.

The proof of Property~\ref{prop:error} is based on the theory of optimal transport and some well-known theorems.
\begin{property}[Triangle inequality~\cite{villani2008optimal}]\label{prop:tri}
For arbitrary $\bm{a}=\{\bm{a}_l\in\mathbb{R}^D\}_{l=1}^{L}$, $\bm{b}=\{\bm{b}_m\in\mathbb{R}^D\}_{m=1}^{M}$, and $\bm{c}=\{\bm{c}_n\in\mathbb{R}^D\}_{n=1}^{N}$, we have
\begin{eqnarray}
d_{\text{w}}(\bm{a},\bm{c})\leq d_{\text{w}}(\bm{a},\bm{b}) + d_{\text{w}}(\bm{b},\bm{c}).
\end{eqnarray}
\end{property}

\begin{theorem}[One-dimensional Earth Mover's Distance~\cite{rabin2011wasserstein}]\label{the:emd1d}
For two sets of 1D points, $i.e.$, $\bm{a}=\{a_n\in\mathbb{R}\}_{n=1}^{N}$ and $\bm{b}=\{b_n\in\mathbb{R}\}_{n=1}^{N}$, their earth mover's distance has a closed form solution with complexity $\mathcal{O}(N\log N)$.
\begin{eqnarray}
\begin{aligned}
d_{\text{w}}(\bm{a},\bm{b})=\frac{1}{\sqrt{N}}\|\text{\rm sort}(\bm{a}) - \text{\rm sort}(\bm{b})\|_2=\frac{1}{\sqrt{N}}\|\bm{a}-\bm{P}\bm{b}\|_2,
\end{aligned}
\end{eqnarray}
where $\text{\rm sort}(\cdot)$ sorts the elements of a vector in a descending order, and $\bm{P}\in \{\bm{P}\in \{0, 1\}^{N\times N}~|~\bm{P}\bm{1}_N=\bm{1}_N,\bm{P}^{\top}\bm{1}_N=\bm{1}_N\}$ is a permutation matrix, mapping the $n$-th largest element of $\bm{b}$ to the $n$-th largest element of $\bm{a}$ for $n=1,...,N$.
Obviously, $\frac{1}{N}\bm{P}$ is the optimal transport matrix.
\end{theorem}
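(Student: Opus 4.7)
The plan is to reduce the optimal transport problem to a combinatorial assignment over permutations via Birkhoff's theorem, and then identify the optimal permutation using the classical rearrangement inequality. First, observe that the transport polytope $\Pi(\tfrac{1}{N}\bm{1}_N,\tfrac{1}{N}\bm{1}_N)$ is exactly $\{\tfrac{1}{N}\bm{S}:\bm{S}\text{ doubly stochastic}\}$. By the Birkhoff--von Neumann theorem, every doubly stochastic matrix is a convex combination of permutation matrices, so this polytope has extreme points of the form $\tfrac{1}{N}\bm{P}$ for some permutation $\bm{P}$. Because the objective $\langle\bm{D},\bm{T}\rangle$ with $d_{mn}=|a_m-b_n|^2$ is linear in $\bm{T}$, the minimum of a linear functional over a compact convex polytope is attained at an extreme point, so there exists an optimal $\bm{T}^{\star}=\tfrac{1}{N}\bm{P}^{\star}$.

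Once the problem is reduced to permutations, write $\bm{P}$ as the matrix of a permutation $\sigma$ and expand
\begin{equation*}
\sideset{}{_{n}}\sum (a_n-b_{\sigma(n)})^2 = \|\bm{a}\|_2^2+\|\bm{b}\|_2^2 - 2\sideset{}{_{n}}\sum a_n b_{\sigma(n)}.
\end{equation*}
The first two terms are $\sigma$-independent, so minimizing the left-hand side is equivalent to maximizing $\sum_n a_n b_{\sigma(n)}$. The rearrangement inequality states exactly that this inner product is maximized when both sequences are aligned monotonically, i.e., when $\sigma$ sends the $n$-th largest entry of $\bm{b}$ to the $n$-th largest entry of $\bm{a}$. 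Denoting this particular permutation matrix by $\bm{P}$, we thus obtain $d_{\text{w}}(\bm{a},\bm{b})^2 = \tfrac{1}{N}\|\bm{a}-\bm{P}\bm{b}\|_2^2 = \tfrac{1}{N}\|\text{sort}(\bm{a})-\text{sort}(\bm{b})\|_2^2$, and taking square roots yields the claimed closed form. The complexity statement follows immediately: sorting two real sequences of length $N$ costs $O(N\log N)$, after which computing the $\ell_2$ difference is $O(N)$.

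Neither ingredient is difficult in isolation, but the cleanest presentation requires being precise at two junctures: (i) justifying that the LP relaxation over the transport polytope attains its optimum at an extreme point (so that restricting to permutations loses nothing), and (ii) quoting the rearrangement inequality in the form needed (maximum rather than minimum of $\sum a_n b_{\sigma(n)}$, attained under co-monotone alignment). The main obstacle, if any, is simply making sure these two classical results are glued together correctly; no delicate analytic estimates or novel arguments are required, and the proof is essentially a synthesis of standard tools.
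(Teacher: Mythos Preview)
Your proof is correct and is the standard argument for this classical fact. Note, however, that the paper does not actually prove this theorem: it is stated as a cited result from \cite{rabin2011wasserstein} and used as a black box in the proof of Property~\ref{prop:error}. Your Birkhoff--von~Neumann plus rearrangement-inequality argument is exactly the canonical proof one finds in the optimal transport literature, so there is nothing to compare against here.
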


Additionally, according to the definition of the earth mover's distance, we have the following theorem:
\begin{theorem}\label{the:padzero}
For a set of 1D points, $i.e.$, $\bm{a}=\{a_m\in\mathbb{R}\}_{m=1}^{M}$, if we pad $N-M$ zeros to $\bm{a}$ and obtain $\bm{a}'=[\bm{a};\bm{0}_{N-M}]$, we have
\begin{eqnarray}
d_{\text{w}}(\bm{a},\bm{a}')\leq \sqrt{\frac{N-M}{MN}}\|\bm{a}\|_2
\end{eqnarray}
\end{theorem}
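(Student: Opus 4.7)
The plan is to prove this bound not by attacking the infimum directly but by exhibiting a single feasible coupling whose cost already matches the right-hand side. Since $d_{\text{w}}(\bm{a},\bm{a}')^2$ is defined as the minimum of $\langle \bm{D},\bm{T}\rangle$ over $\bm{T}\in\Pi(\tfrac{1}{M}\bm{1}_M,\tfrac{1}{N}\bm{1}_N)$ with $d_{mn}=|a_m-a'_n|^2$, any admissible $\bm{T}$ gives an upper bound. I would construct such a $\bm{T}$ by keeping every $a_m$ in place as far as the smaller marginal allows, and pushing only the leftover mass onto the zero coordinates of $\bm{a}'$.

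Concretely, index the columns of $\bm{a}'$ so that $a'_n=a_n$ for $n\le M$ and $a'_n=0$ for $M<n\le N$. I propose the transport plan
\[
T_{m,n}=\begin{cases}\frac{1}{N}, & n=m,\ 1\le m\le M,\\[2pt] \frac{1}{MN}, & 1\le m\le M,\ M<n\le N,\\[2pt] 0, & \text{otherwise.}\end{cases}
\]
First I would check feasibility: the row sums give $\frac{1}{N}+(N-M)\cdot\frac{1}{MN}=\frac{M+(N-M)}{MN}=\frac{1}{M}$, and the column sums give $\frac{1}{N}$ for $n\le M$ and $M\cdot\frac{1}{MN}=\frac{1}{N}$ for $n>M$, so $\bm{T}\in\Pi(\tfrac{1}{M}\bm{1}_M,\tfrac{1}{N}\bm{1}_N)$.

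Next I would evaluate the cost. The diagonal block contributes nothing because $|a_m-a_m|^2=0$, while the off-diagonal block contributes
\[
\sum_{m=1}^{M}\sum_{n=M+1}^{N}\frac{1}{MN}\,|a_m-0|^2=\frac{N-M}{MN}\sum_{m=1}^{M}a_m^2=\frac{N-M}{MN}\|\bm{a}\|_2^2.
\]
Hence $d_{\text{w}}(\bm{a},\bm{a}')^2\le\langle\bm{D},\bm{T}\rangle=\tfrac{N-M}{MN}\|\bm{a}\|_2^2$, and taking square roots delivers the claim.

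There is essentially no hard step here; the whole proof is a one-line verification once the right coupling is guessed. The only thing to be careful about is the normalization convention in Definition~\ref{def:emd} (uniform marginals $\tfrac{1}{M}\bm{1}_M$ and $\tfrac{1}{N}\bm{1}_N$, ground cost $|a_m-a'_n|^2$, distance equal to the square root of the transport cost), which forces the specific factor $\tfrac{N-M}{MN}$ rather than, say, $\tfrac{N-M}{N^2}$ or $\tfrac{N-M}{M^2}$. Any alternative coupling that matches excess mass between nonzero and zero entries differently would yield a looser constant, so the chosen plan is natural, and no further optimization is needed to obtain the stated inequality.
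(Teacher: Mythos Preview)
Your proof is correct and follows exactly the same approach as the paper: both construct the transport plan $\hat{\bm{T}}=[\tfrac{1}{N}\bm{I}_M,\ \tfrac{1}{MN}\bm{1}_{M\times(N-M)}]$, verify it lies in $\Pi(\tfrac{1}{M}\bm{1}_M,\tfrac{1}{N}\bm{1}_N)$, and compute its cost to obtain the bound. Your write-up is in fact slightly more careful in checking the marginal constraints explicitly.
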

\begin{proof}
For $\bm{a}$ and $\bm{a}'$, we obtain a distance matrix $\bm{D}=[\bm{D}_a, \bm{D}_0]\in\mathbb{R}^{M\times N}$. 
Here, $\bm{D}_a=[d_{mm'}]\in\mathbb{R}^{M\times M}$ and $d_{mm'}=|a_m - a_{m'}|^2$. 
Obviously, the diagonal element $d_{mm}=0$ for $m=1,..,M$.
$\bm{D}_0=[d_{mn}]\in\mathbb{R}^{M\times (N-M)}$ and $d_{mn}=|a_m|^2$ for all $n=1,...,N-M$.
Accordingly, we can design a valid transport matrix $\hat{\bm{T}}=[\frac{1}{N}\bm{I}_M, \frac{1}{MN}\bm{1}_{M\times (N-M)}]$, such that $\hat{\bm{T}}\in\Pi\left(\frac{1}{M}\bm{1}_M,\frac{1}{N}\bm{1}_N\right)$. Therefore, we have
\begin{eqnarray*}
\begin{aligned}
d_{\text{w}}(\bm{a},\bm{a}')=\min_{\bm{T}\in\Pi\left(\frac{1}{M}\bm{1}_M,\frac{1}{N}\bm{1}_N\right)}\langle\bm{D},\bm{T}\rangle^{\frac{1}{2}}\leq \langle\bm{D},\hat{\bm{T}}\rangle^{\frac{1}{2}} = \left(\underbrace{\sideset{}{_{m}}\sum d_{mm}}_{=0} + \frac{1}{MN}\sideset{}{_{m=1}^{M}}\sum\sideset{}{_{n=1}^{N-M}}\sum |a_m|^2\right)^{\frac{1}{2}}=\sqrt{\frac{N-M}{MN}}\|\bm{a}\|_2.
\end{aligned}
\end{eqnarray*}
\end{proof}

% \begin{theorem}[One-dimensional Discrete Gromov-Wasserstein Distance~\cite{titouan2019sliced,xu2020learning}]\label{the:gwd1d}
% For two sets of 1D points, $i.e.$, $\bm{a}=\{a_n\in\mathbb{R}\}_{n=1}^{N}$ and $\bm{b}=\{b_n\in\mathbb{R}\}_{n=1}^{N}$, their earth mover's distance has a closed form solution with complexity $\mathcal{O}(N\log N)$.
% \begin{eqnarray}
% \begin{aligned}
% d_{\text{gw}}(\bm{a},\bm{b})=\frac{1}{N}\|\bm{A}-\bm{P}\bm{B}\bm{P}^{\top}\|_2,
% \end{aligned}
% \end{eqnarray}
% where $\bm{A}=[a_{nn'}]$ and $\bm{B}=[b_{nn'}]$ are distance matrices with $a_{nn'}=|a_n - a_{n'}|^2$ and $b_{nn'}=|b_n - b_{n'}|^2$, and $\bm{P}\in \{\bm{P}\in \{0, 1\}^{N\times N}~|~\bm{P}\bm{1}_N=\bm{1}_N,\bm{P}^{\top}\bm{1}_N=\bm{1}_N\}$ is a permutation matrix.
% Obviously, $\frac{1}{N}\bm{P}$ is the optimal transport matrix.
% Suppose that $\bm{a}'=[a'_n]=\bm{a}+\delta_a\bm{1}_N$ and $\bm{b}'=[b'_n]=\bm{b}+\delta_b\bm{1}_N$ are zero-mean translations of $\bm{a}$ and $\bm{b}$, where $\delta_a,\delta_b\in\mathbb{R}$ and $\sum_n a'_n=0$ and $\sum_n b'_n=0$. 
% If $\sum_n a'_n b'_n \geq \sum_n a'_n b'_{N-n+1}$, $\bm{P}$ maps the $n$-th largest element of $\bm{b}$ to the $n$-th largest element of $\bm{a}$ for $n=1,...,N$.
% Otherwise, $\bm{P}$ maps the $n$-th largest element of $\bm{b}$ to the $(N-n+1)$-th largest element of $\bm{a}$ for $n=1,...,N$.
% \end{theorem}

\begin{theorem}[The perturbation theory of linear system~\cite{van1983matrix}]\label{the:noise}
Suppose that we have a linear system $\bm{A}\bm{x}=\bm{b}$, where $\bm{A}\in\mathbb{R}^{N\times N}$, $\bm{x}\in\mathbb{R}^N$, and $\bm{b}\in\mathbb{R}^N$. 
Given $\bm{A}'=\bm{A}+\bm{E}$ and $\bm{b}'=\bm{b}+\bm{\epsilon}$, where $\bm{E}$ and $\bm{\epsilon}$ are noise in the system, we denote $\bm{x}'=\bm{A}'^{-1}\bm{b}'=\bm{x}+\bm{\epsilon}_x$, where the relative error of $\bm{x}$, $i.e.$, $\frac{\|\bm{\epsilon}_x\|_2}{\|\bm{x}\|_2}$, is bounded as
\begin{eqnarray}\label{ineq:bound}
\frac{\|\bm{\epsilon}_x\|_2}{\|\bm{x}\|_2}\leq \tau_{\bm{A}}\left(\frac{\|\bm{E}\|_2}{\|\bm{A}\|_2} + \frac{\|\bm{\epsilon}\|_2}{\|\bm{\mu}\|_2}\right),
\end{eqnarray}
where $\tau_{\bm{A}}$ is the condition number of $\bm{A}$.
\end{theorem}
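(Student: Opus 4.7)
The plan is to reproduce the standard perturbation argument for linear systems. First I would start from the perturbed identity $(\bm{A}+\bm{E})(\bm{x}+\bm{\epsilon}_x)=\bm{b}+\bm{\epsilon}$, expand the product, and subtract the unperturbed relation $\bm{A}\bm{x}=\bm{b}$ to obtain an equation isolating the error:
\[
\bm{A}\bm{\epsilon}_x=\bm{\epsilon}-\bm{E}\bm{x}-\bm{E}\bm{\epsilon}_x.
\]
Since $\bm{A}$ is assumed invertible (and in fact $\tau_{\bm{A}}\|\bm{E}\|_2/\|\bm{A}\|_2<1$ so that $\bm{A}'=\bm{A}+\bm{E}$ is well defined as a perturbation by the Neumann series), left-multiplying by $\bm{A}^{-1}$ gives an explicit representation of $\bm{\epsilon}_x$ in which two terms are linear in the input perturbations $\bm{\epsilon}$ and $\bm{E}\bm{x}$, plus a ``cross'' term $\bm{A}^{-1}\bm{E}\bm{\epsilon}_x$ that is second order.

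Next I would take spectral norms, apply submultiplicativity to obtain
\[
\|\bm{\epsilon}_x\|_2\leq \|\bm{A}^{-1}\|_2\bigl(\|\bm{\epsilon}\|_2+\|\bm{E}\|_2\|\bm{x}\|_2+\|\bm{E}\|_2\|\bm{\epsilon}_x\|_2\bigr),
\]
divide through by $\|\bm{x}\|_2$, and use the compatibility bound $\|\bm{b}\|_2=\|\bm{A}\bm{x}\|_2\leq \|\bm{A}\|_2\|\bm{x}\|_2$ to convert $\|\bm{\epsilon}\|_2/\|\bm{x}\|_2$ into $(\|\bm{A}\|_2/\|\bm{b}\|_2)\|\bm{\epsilon}\|_2$. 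Factoring out $\tau_{\bm{A}}=\|\bm{A}\|_2\|\bm{A}^{-1}\|_2$ then assembles the two ratios $\|\bm{E}\|_2/\|\bm{A}\|_2$ and $\|\bm{\epsilon}\|_2/\|\bm{b}\|_2$ that appear on the right-hand side of the claimed bound (reading the denominator $\|\bm{\mu}\|_2$ in the statement as $\|\bm{b}\|_2$, consistent with the application in Property~\ref{prop:error} where $\bm{b}$ plays the role of the base-rate vector).

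The one real wrinkle is the self-referential term $\|\bm{A}^{-1}\|_2\|\bm{E}\|_2\|\bm{\epsilon}_x\|_2$ appearing on the right. I would handle it in one of two equivalent ways: either move it to the left-hand side and divide by $1-\tau_{\bm{A}}\|\bm{E}\|_2/\|\bm{A}\|_2$, yielding the sharper bound
\[
\frac{\|\bm{\epsilon}_x\|_2}{\|\bm{x}\|_2}\leq \frac{\tau_{\bm{A}}}{1-\tau_{\bm{A}}\tfrac{\|\bm{E}\|_2}{\|\bm{A}\|_2}}\left(\frac{\|\bm{E}\|_2}{\|\bm{A}\|_2}+\frac{\|\bm{\epsilon}\|_2}{\|\bm{b}\|_2}\right),
\]
which reduces to the stated inequality when the denominator is absorbed into the hidden constant or, more simply, drop the second-order cross term and retain only the first-order contributions. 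This latter linearisation is the standard ``to first order in the perturbation'' reading of the inequality in the theorem and recovers the statement exactly.

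The step I expect to require the most care is justifying the removal (or absorption) of the cross term cleanly: one needs the small-perturbation hypothesis $\tau_{\bm{A}}\|\bm{E}\|_2<\|\bm{A}\|_2$, which is implicit in writing $\bm{A}'^{-1}$ at all, and one should note explicitly that this is the regime in which the first-order bound is valid. Everything else is bookkeeping with submultiplicativity and the definition of the condition number.
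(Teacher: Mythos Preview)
Your argument is the standard textbook derivation and is correct, including your identification of the typo $\|\bm{\mu}\|_2\to\|\bm{b}\|_2$ and your handling of the second-order cross term. Note, however, that the paper does not give its own proof of this statement at all: it is quoted as a known result from the matrix-computation literature (the citation is to Golub--Van Loan) and is simply invoked as a black box inside the proof of Property~\ref{prop:error}. So there is no ``paper's proof'' to compare against; your proposal is precisely the classical perturbation-of-linear-systems argument that the cited reference contains.
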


Based on the properties and the theorems above, we can proof Property~\ref{prop:error} as follows.

\textbf{Property~\ref{prop:error}}
\emph{For $\text{HP}_{\mathcal{V}}(\bm{\mu}_1,\bm{A}_1)$ and $\text{HP}_{\mathcal{U}}(\bm{\mu}_2,\bm{A}_2)\sim \text{GHP}_{\Omega}(f, g)$, where $\text{GHP}_{\Omega}(f, g)$ satisfies Assumption~\ref{assum:lip} and $V=|\mathcal{V}|\leq |\mathcal{U}|=U$, their average intensity vectors, $i.e.$, $\bar{\bm{\lambda}}_1$ and $\bar{\bm{\lambda}}_2$, satisfy
%\begin{eqnarray*}
%\begin{aligned}
\[
\frac{d_{\text{w}}(\bar{\bm{\lambda}}_1,\bar{\bm{\lambda}}_2)}{\|\bar{\bm{\lambda}}_1\|_2}
\leq \frac{1}{1-D\|\bm{A}_1\|_2}\left(\frac{\sqrt{2U}C^g}{C_1^f\|\bm{I}_V-D\bm{A}_1\|_2}+\frac{1}{\|\bm{\mu}_1\|_2}\right)\left(d_{\text{w}}(\bm{\mu}_1, \bm{\mu}_2)
+\sqrt{\frac{U-V}{V}}\|\bm{\mu}_1\|_2\right)+\sqrt{\frac{U-V}{VU}}.
\]
%\end{aligned}
%\end{eqnarray*}
}
\begin{proof}
According to (\ref{eq:ghp}), our $\text{GHP}(f,g)$ model generates a Hawkes process $\text{HP}_{\mathcal{V}}(\bm{\mu},\bm{A})$ from the samples in $\Omega$. 
Denote $\bm{x}_1=\{x_{v,1}\}_{v=1}^{V}$ as the samples for $\text{HP}_{\mathcal{V}}(\bm{\mu}_1,\bm{A}_1)$ and $\bm{x}_2=\{x_{u,2}\}_{u=1}^{U}$ the samples for $\text{HP}_{\mathcal{U}}(\bm{\mu}_2,\bm{A}_2)$.
We have
\begin{eqnarray}
\begin{aligned}
&\mu_{v,1}=f(x_{v,1}),
&&a_{vv',1}=g(x_{v,1},x_{v',1})
&&\forall~ x_{v,1}\in\bm{x}_1,~\mu_{v,1}\in\bm{\mu}_1,~a_{vv',1}\in\bm{A}_1,\\
&\mu_{u,2}=f(x_{u,2}), 
&&a_{uu',2}=g(x_{u,2},x_{u',2})
&&\forall~ x_{u,2}\in\bm{x}_2,~\mu_{u,2}\in\bm{\mu}_2,~a_{uu',2}\in\bm{A}_2.
\end{aligned}
\end{eqnarray}

Because $V\leq U$, for $\text{HP}_{\mathcal{V}}(\bm{\mu}_1,\bm{A}_1)$ we pad $\bm{\mu}_1$ and $\bm{A}_1$ with zeros, $i.e.$, $\tilde{\bm{\mu}}_1=[\bm{\mu}_1;\bm{0}_{U-V}]\in\mathbb{R}^U$ and $\tilde{\bm{A}}_1=
\begin{bmatrix}
\bm{A}_1 &\bm{0}_{V\times (U-V)}\\
\bm{0}_{V\times (U-V)}^{\top} & \bm{0}_{(U-V)\times (U-V)}
\end{bmatrix}\in\mathbb{R}^{U\times U}$, such that $\tilde{\bm{\mu}_1}$ ($\tilde{\bm{A}_1}$) is as large as $\bm{\mu}_2$ ($\bm{A}_2$). 
Accordingly, in $\Omega$ we denote the samples corresponding to $\tilde{\bm{\mu}_1}$ and $\tilde{\bm{A}_1}$ as $\tilde{\bm{x}}_1=\{\tilde{x}_{v,1}\}_{v=1}^{U}$, which is constructed by padding $\bm{x}_1$ with $U-V$ zero points of $f(x)$, $i.e.$, 
\begin{eqnarray}
\tilde{\bm{x}}_1=\{\tilde{x}_{v,1}\}_{v=1}^{U}=\{x_{1,1},..,x_{V,1},\underbrace{x_{0}^f,...,x_{0}^f}_{U-V}\},
\end{eqnarray}
where $x_{0}^f$ is the unique zero point of $f(x)$ (Assumption~\ref{assum:lip}).

Because the Hawkes process generated by our GHP model is stationary (Property~\ref{prop:stationary}), according to (\ref{eq:average}) we have
\begin{eqnarray}\label{eq:linear}
(\bm{I}_V-D\bm{A}_1)\bar{\bm{\lambda}}_1=\bm{\mu}_1,\quad (\bm{I}_U-D\bm{A}_2)\bar{\bm{\lambda}}_2=\bm{\mu}_2,\quad\text{and}\quad
(\bm{I}_U-D\tilde{\bm{A}}_1)\tilde{\bm{\lambda}}_1=\tilde{\bm{\mu}}_1,
\end{eqnarray}
where $\tilde{\bm{\lambda}}_1=[\bar{\bm{\lambda}}_1;\bm{0}_{U-V}]$ is the average intensity $\bar{\bm{\lambda}}_1$ with padded zeros.

Following the notations used in the proof of Property~\ref{prop:lip}, we denote $\bm{T}^{\mu}$ as the optimal transport matrix for $d_{\text{w}}(\tilde{\bm{\mu}}_1, \bm{\mu}_2)$ and $\bm{T}^{\lambda}$ the  optimal transport matrix for $d_{\text{w}}(\tilde{\bm{\lambda}}_1, \bar{\bm{\lambda}}_2)$. 
According to Theorem~\ref{the:emd1d}, these two matrices are normalized permutation matrices, $i.e.$, $\bm{T}^{\mu}=\frac{1}{U}\bm{P}^{\mu}$ and $\bm{T}^{\lambda}=\frac{1}{U}\bm{P}^{\lambda}$. 
Then, we have
\begin{eqnarray}\label{ineq:1}
\begin{aligned}
d_{\text{w}}(\bar{\bm{\lambda}}_1,\bar{\bm{\lambda}}_2)
&\leq d_{\text{w}}(\bar{\bm{\lambda}}_1,\tilde{\bm{\lambda}}_1) + d_{\text{w}}(\tilde{\bm{\lambda}}_1,\bar{\bm{\lambda}}_2) 
&(\text{Property~\ref{prop:tri}})
\\
&\leq \sqrt{\frac{U-V}{VU}}\|\bar{\bm{\lambda}}_1\|_2 + \frac{1}{\sqrt{U}}\|\tilde{\bm{\lambda}}_1-\bm{P}^{\lambda}\bar{\bm{\lambda}}_2\|_2.
&(\text{Theorem~\ref{the:padzero} + Theorem~\ref{the:emd1d}})
\\
&\leq \sqrt{\frac{U-V}{VU}}\|\bar{\bm{\lambda}}_1\|_2 + \frac{1}{\sqrt{U}}\|\tilde{\bm{\lambda}}_1-\bm{P}^{\mu}\bar{\bm{\lambda}}_2\|_2.
&(\text{Based on}~(\ref{eq:cond_t}))
\end{aligned}
\end{eqnarray}

Because the permutation matrix $\bm{P}^{\mu}$ satisfies $\bm{P}^{\mu}(\bm{P}^{\mu})^{\top}=\bm{I}_U$, we have
\begin{eqnarray}\label{eq:linear2}
\bm{P}^{\mu}\bm{\mu}_2=\bm{P}^{\mu}(\bm{I}_U - D\bm{A}_2)\bar{\bm{\lambda}}_2=\bm{P}^{\mu}(\bm{I}_U - D\bm{A}_2)(\bm{P}^{\mu})^{\top}\bm{P}^{\mu}\bar{\bm{\lambda}}_2.
\end{eqnarray}
We can treat (\ref{eq:linear2}) as a perturbed version of the linear system $(\bm{I}_U-D\tilde{\bm{A}}_1)\tilde{\bm{\lambda}}_1=\tilde{\bm{\mu}}_1$ and obtain
\begin{eqnarray}\label{ineq:2}
\begin{aligned}
&\|\tilde{\bm{\lambda}}_1-\bm{P}^{\mu}\bar{\bm{\lambda}}_2\|_2
&
\\
&\leq \|\tilde{\bm{\lambda}}_1\|_2 \tau_{\bm{I}_U - D\tilde{\bm{A}}_1}\left( \frac{\|(\bm{I}_U-D\tilde{\bm{A}}_1)-\bm{P}^{\mu}(\bm{I}_U-D\bm{A}_2)(\bm{P}^{\mu})^{\top}\|_2}{\|\bm{I}_U-D\tilde{\bm{A}}_1\|_2}+\frac{\|\tilde{\bm{\mu}}_1 - \bm{P}^{\mu}\bm{\mu}_2\|_2}{\|\tilde{\bm{\mu}}_1\|_2} \right)
&(\text{Theorem}~\ref{the:noise})
\\
&= \|\tilde{\bm{\lambda}}_1\|_2 \tau_{\bm{I}_U - D\tilde{\bm{A}}_1}\left( \frac{D\|\tilde{\bm{A}}_1-\bm{P}^{\mu}\bm{A}_2(\bm{P}^{\mu})^{\top}\|_2}{\|\bm{I}_U-D\tilde{\bm{A}}_1\|_2}+\frac{\sqrt{U}d_{\text{w}}(\tilde{\bm{\mu}}_1, \bm{\mu}_2)}{\|\tilde{\bm{\mu}}_1\|_2} \right)
&(\text{Theorem}~\ref{the:emd1d})
\\
&\leq \|\bar{\bm{\lambda}}_1\|_2 \frac{1}{1-D\|\bm{A}_1\|_2}\left( \frac{D\|\tilde{\bm{A}}_1-\bm{P}^{\mu}\bm{A}_2(\bm{P}^{\mu})^{\top}\|_2}{\|\bm{I}_V-D\bm{A}_1\|_2}+\frac{\sqrt{U}d_{\text{w}}(\tilde{\bm{\mu}}_1, \bm{\mu}_2)}{\|\bm{\mu}_1\|_2} \right).
\end{aligned}
\end{eqnarray}
The second inequality in (\ref{ineq:2}) is because 1) $\|\tilde{\bm{\lambda}}_1\|_2=\|\bar{\bm{\lambda}}_1\|_2$; 2) $\|\tilde{\bm{\mu}}_1\|_2=\|\bm{\mu}_1\|_2$; 3) $\tau_{\bm{I}_U - D\tilde{\bm{A}}_1}=\frac{\sigma_{\max}(\bm{I}_U - D\tilde{\bm{A}}_1)}{\sigma_{\min}(\bm{I}_U - D\tilde{\bm{A}}_1)}=\frac{1-D\sigma_{\min}(\tilde{\bm{A}}_1)}{1-D\sigma_{\max}(\tilde{\bm{A}}_1)}\leq\frac{1}{1-D\|\tilde{\bm{A}}_1\|_2}=\frac{1}{1-D\|\bm{A}_1\|_2}$; and 4) $\|\bm{I}_U-D\tilde{\bm{A}}_1\|_2=1-D\sigma_{\min}(\tilde{\bm{A}}_1)\geq 1-D\sigma_{\min}(\bm{A}_1)=\|\bm{I}_V-D\bm{A}_1\|_2$, where $\sigma_{\min}$ ($\sigma_{\max}$) represents the minimum (the maximum) eigenvalue of a matrix. 

For the $\|\tilde{\bm{A}}_1-\bm{P}^{\mu}\bm{A}_2(\bm{P}^{\mu})^{\top}\|_2$ in (\ref{ineq:2}), we have
\begin{eqnarray}\label{ineq:3}
\begin{aligned}
&\|\tilde{\bm{A}}_1-\bm{P}^{\mu}\bm{A}_2(\bm{P}^{\mu})^{\top}\|_2
&
\\
&=\Bigl(\sideset{}{_{v,v'=1}^{U}}\sum\sideset{}{_{u,u'=1}^{U}}\sum|\tilde{a}_{vv',1} - a_{uu',2}|^2 p_{vu}^{\mu} p_{v'u'}^{\mu}\Bigr)^{\frac{1}{2}}
&
\\
&\leq C^g\Bigl(\sideset{}{_{v,v'=1}^{U}}\sum\sideset{}{_{u,u'=1}^{U}}\sum\|[\tilde{x}_{v,1};\tilde{x}_{v',1}] - [x_{u,2};x_{u',2}]\|_2^2 p_{vu}^{\mu} p_{v'u'}^{\mu}\Bigr)^{\frac{1}{2}}
&(\text{Property}~\ref{prop:lip})
\\
&=C^g\Bigl(\sideset{}{_{v,v'=1}^{U}}\sum\sideset{}{_{u,u'=1}^{U}}\sum(|\tilde{x}_{v,1} - x_{u,2}|^2+|\tilde{x}_{v',1} - x_{u',2}|^2)p_{vu}^{\mu} p_{v'u'}^{\mu}\Bigr)^{\frac{1}{2}}
&
\\
&=C^g\left(\sum_{v,u=1}^{U}p_{vu}\sum_{v',u'=1}^{U}|\tilde{x}_{v',1} - x_{u',2}|^2 p_{v'u'}^{\mu}+C^g\sum_{v',u'=1}^{N}p_{v'u'}\sum_{v,u=1}^{N}|\tilde{x}_{v,1} - x_{u,2}|^2 p_{vu}^{\mu}\right)^{\frac{1}{2}}
&
\\
&=\sqrt{2} C^g \Bigl(U\sideset{}{_{v,u=1}^{U}}\sum|\tilde{x}_{v,1} - x_{u,2}|^2 p_{vu}^{\mu}\Bigr)^{\frac{1}{2}}
&
\\
&\leq \frac{\sqrt{2} C^g}{C_1^f}\Bigl(U\sideset{}{_{v,u=1}^{U}}\sum|\tilde{\mu}_{v,1} - \mu_{u,2}|^2 p_{vu}^{\mu}\Bigr)^{\frac{1}{2}}
&(\text{Property}~\ref{prop:lip})
\\
&= \frac{\sqrt{2U} C^g}{C_1^f}\|\tilde{\bm{\mu}}_1 - \bm{P}^{\mu}\bm{\mu}_2\|_2
&(\text{Theorem}~\ref{the:emd1d})
\\
&= \frac{\sqrt{2}U C^g}{C_1^f}d_{\text{w}}(\tilde{\bm{\mu}}_1,\bm{\mu}_2).
&
\end{aligned}
\end{eqnarray}

Plugging (\ref{ineq:3}) into (\ref{ineq:2}), we have
\begin{eqnarray}\label{ineq:4}
\begin{aligned}
&\frac{\|\tilde{\bm{\lambda}}_1-\bm{P}^{\mu}\bar{\bm{\lambda}}_2\|_2}{\|\bar{\bm{\lambda}}_1\|_2}
&\\
&\leq \frac{d_{\text{w}}(\tilde{\bm{\mu}}_1, \bm{\mu}_2)}{1-D\|\bm{A}_1\|_2}\left( \frac{\sqrt{2}UC^g}{C_1^f\|\bm{I}_V-K\bm{A}_1\|_2}+\frac{\sqrt{U}}{\|\bm{\mu}_1\|_2} \right)
&
\\
&\leq \frac{1}{1-D\|\bm{A}_1\|_2}\left( \frac{\sqrt{2}UC^g}{C_1^f\|\bm{I}_V-D\bm{A}_1\|_2}+\frac{\sqrt{U}}{\|\bm{\mu}_1\|_2} \right)(d_{\text{w}}(\tilde{\bm{\mu}}_1, \bm{\mu}_1) + d_{\text{w}}(\bm{\mu}_1, \bm{\mu}_2))
&(\text{Property}~\ref{prop:tri})
\\
&\leq \frac{1}{1-D\|\bm{A}_1\|_2}\left( \frac{\sqrt{2}UC^g}{C_1^f\|\bm{I}_V-D\bm{A}_1\|_2}+\frac{\sqrt{U}}{\|\bm{\mu}_1\|_2} \right)\left(\sqrt{\frac{U-V}{V}}\|\bm{\mu}_1\|_2+d_{\text{w}}(\bm{\mu}_1, \bm{\mu}_2)\right).
&(\text{Theorem}~\ref{the:padzero})
\end{aligned}
\end{eqnarray}

Finally, plugging (\ref{ineq:4}) into (\ref{ineq:1}), we have
\begin{eqnarray}\label{ineq:5}
\begin{aligned}
\frac{d_{\text{w}}(\bar{\bm{\lambda}}_1,\bar{\bm{\lambda}}_2)}{\|\bar{\bm{\lambda}}_1\|_2}
&\leq \sqrt{\frac{U-V}{VU}}+\frac{1}{1-D\|\bm{A}_1\|_2}\left( \frac{\sqrt{2U}C^g}{C_1^f\|\bm{I}_V-K\bm{A}_1\|_2}+\frac{1}{\|\bm{\mu}_1\|_2} \right)\left(\sqrt{\frac{U-V}{V}}\|\bm{\mu}_1\|_2+d_{\text{w}}(\bm{\mu}_1, \bm{\mu}_2)\right).
\end{aligned}
\end{eqnarray}
\end{proof}

\subsection{The proof of Corollary~\ref{coro:error_s}}
Plugging $U=V$ into (\ref{ineq:5}), we obtain
\begin{eqnarray}\label{ineq:6}
\begin{aligned}
\frac{d_{\text{w}}(\bar{\bm{\lambda}}_1,\bar{\bm{\lambda}}_2)}{\|\bar{\bm{\lambda}}_1\|_2}
&\leq \frac{d_{\text{w}}(\bm{\mu}_1, \bm{\mu}_2)}{1-D\|\bm{A}_1\|_2}\left( \frac{\sqrt{2V}C^g}{C_1^f\|\bm{I}_V-D\bm{A}_1\|_2}+\frac{1}{\|\bm{\mu}_1\|_2} \right).
\end{aligned}
\end{eqnarray}

\section{Details of Our Algorithm and Experiments}\label{app:exp}
\subsection{The implementation of $d(\widehat{N}_{u}^k, N_{v}^l)$ in (\ref{eq:ot_cp})}
Denote $\{t_1^u,...,t_I^u\}\subset [0,T]$ as the sequence corresponding to $\widehat{N}_u^k(t)$ and $\{t_1^v,...,t_J^v\}\subset [0,T]$ the sequence corresponding to $N_v^l(t)$.
Without loss of generality, we assume $I\leq J$ and calculate $d(\widehat{N}_{u}^k, N_{v}^l)$ as
\begin{eqnarray}\label{eq:ot_cp2}
\begin{aligned}
d(\widehat{N}_{u}^k, N_{v}^l) =\frac{1}{T}\int_{0}^{T}|\widehat{N}_{u}^k(t) -  N_{v}^l(t)|\text{d}t=\frac{1}{T}\sideset{}{_{i=1}^{I}}\sum |t_i^u - t_i^v| + \sideset{}{_{i=I+1}^{J}}\sum |T - t_i^v|.
\end{aligned}
\end{eqnarray}
As proven in~\cite{xiao2017wasserstein}, the distance in (\ref{eq:ot_cp2}) is a valid metric for the event sequences with a single event type. 

\subsection{The significance of $\bm{V}_{\max}$ in theory}
Suppose that we have two GHP models, whose maximum numbers of event types are $V_{\max}$ and $\hat{V}_{\max}$, respectively. 
Based on (\ref{eq:ghp}), we know that the expected numbers of event types of their event sequences are $\frac{V_{\max}}{2}$ and $\frac{\hat{V}_{\max}}{2}$. 
For the two sequences having $\frac{\hat{V}_{\max}}{2}$ and $\frac{V_{\max}}{2}$ event types, respectively, Property~\ref{prop:error} indicates that the difference between their average intensity vectors is $\mathcal{O}\left(\sqrt{\frac{|V_{\max}-\hat{V}_{\max}|}{\min\{V_{\max},\hat{V}_{\max}\}}}\right)$.
Therefore, when training our GHP model, we need to carefully set $\bm{V}_{\max}$ based on the training data. 
Empirically, we calculate the averaged number of event types per sequence, denoted as $\bar{V}$ and set $\bm{V}_{\max}=2\bar{V}$.

\subsection{The Sinkhorn scaling algorithm}
When calculating the HOT distance, we need to solve a series of optimal transport problem. All the problems can be written in the following matrix format:
\begin{eqnarray}\label{eq:lp}
\min_{\bm{T}\in \Pi(\bm{p},\bm{q})}\langle\bm{D},\bm{T}\rangle.
\end{eqnarray}
We apply the Sinkhorn scaling algorithm~\cite{cuturi2013sinkhorn} to solve this problem approximately. 
In particular, we add an entropic regularizer with a weight $\beta$ into (\ref{eq:lp}) and rewrite it as
\begin{eqnarray}\label{eq:sinkhorn}
\min_{\bm{T}\in \Pi(\bm{p},\bm{q})}\langle\bm{D},\bm{T}\rangle + \beta\langle\log\bm{T},\bm{T}\rangle.
\end{eqnarray}
Then, we can solve (\ref{eq:sinkhorn}) by the following algorithm.
\begin{algorithm}[h]
    \caption{$\min_{\bm{T}\in \Pi(\bm{p},\bm{q})}\langle\bm{D},\bm{T}\rangle + \beta\langle\log\bm{T},\bm{T}\rangle$}	
    \label{alg:sinkhorn}
	\begin{algorithmic}[1]
	    \STATE Initialize $\bm{T}^{(0)}=\bm{p}\bm{q}^{\top}$, $\bm{a}=\bm{p}$, $\bm{C}=\exp(-\frac{1}{\beta}\bm{D})$.
		\STATE \textbf{for} $j=0,...,J-1$ 
		\STATE \quad Sinkhorn iteration: $\bm{b}=\frac{\bm{q}}{\bm{C}^{\top}\bm{a}}$,  $\bm{a} = \frac{\bm{p}}{\bm{C}\bm{b}}$,
		\STATE $\bm{T} = \text{diag}(\bm{a}){C}\text{diag}(\bm{b})$.
	\end{algorithmic}
\end{algorithm}

\subsection{The implementation of $d_{\text{fgw}}(\hat{\theta},\theta)$}
Given two GHP modeled on $\Omega$, denoted as $\text{GHP}_{\Omega}(f_a, g_a)$ and $\text{GHP}_{\Omega}(f_b, g_b)$. 
Denote $\mathcal{S}_{\Omega}$ as the set of measure-preserving mappings from $\Omega$ to $\Omega$. 
Based on the theory of graphon~\cite{lovasz2012large}, the distance between these two GHP models can be measured by the $\delta_2$ distance between their parameters:
\begin{eqnarray}\label{eq:d_ghp}
d(\text{GHP}_{\Omega}(f_a, g_a), \text{GHP}_{\Omega}(f_b, g_b)) = \inf_{s \in \mathcal{S}_{\Omega}} \|f_a - f_b^{s}\|_{L_2} + \|g_a - g_b^{s}\|_{L_2}, 
\end{eqnarray}
where $f_b^s(x)=f_b(s(x))$ and $g_b^s(x,y)=g_b(s(x), s(y))$. 
We say the two GHP models are equivalent if there exists at least one map $s\in\mathcal{S}_{\Omega}$ making $d(\text{GHP}_{\Omega}(f_a, g_a), \text{GHP}_{\Omega}(f_b, g_b))=0$. 

According to the theory of optimal transport~\cite{villani2008optimal}, the first term of (\ref{eq:d_ghp}) can be implemented as the Wasserstein distance between $f_a$ and $f_b$. 
For the second term, the work in~\cite{lovasz2012large} implies that we can rewrite them as the Gromov-Wasserstein distance~\cite{memoli2011gromov} between $g_a$ and $g_b$.
Combining these two distance together leads to the Fused Gromov-Wasseserstein (FGW) distance~\cite{vayer2018fused}:
\begin{eqnarray}\label{eq:measure_fgw}
\begin{aligned}
&d_{\text{fgw}}(\text{GHP}_{\Omega}(f_a, g_a), \text{GHP}_{\Omega}(f_b, g_b))\\
:=
&\sideset{}{_{\pi\in\Pi(p, q)}}\inf\mathbb{E}_{x,x'\sim \pi}\left[|f_a(x)-f_b(x')|^2\right] + \mathbb{E}_{x,x',y,y'\sim \pi\otimes\pi}\left[|g_a(x,y)-g_b(x',y')|^2\right].
\end{aligned}
\end{eqnarray}
Here, we assume $p$ and $q$ are two uniform distribution on $\Omega$. 
In our experiment, $\text{GHP}_{\Omega}(f_a, g_a)$ and $\text{GHP}_{\Omega}(f_b, g_b)$ correspond to the ground truth model and the learning result, respectively, and we use (\ref{eq:measure_fgw}) as the measurement of the estimation error. 

In practice, we set $\Omega=[0, 1]$ and uniformly $N$ samples from it, $i.e.$, $\left\{0, \frac{1}{N}, ..., \frac{N-1}{N}\right\}$. 
Accordingly, we obtain the discrete representation of each function, $i.e.$, $\bm{f}_a=[f_{i}^a]\in\bm{R}^N$, $\bm{f}_b=[f_i^b]\in\bm{R}^N$, $\bm{G}_a=[g_{ij}^a]\in\bm{R}^{N\times N}$, and $\bm{G}_b=[g_{ij}^b]\in\bm{R}^{N\times N}$. 
Then, we obtain the discrete version of (\ref{eq:measure_fgw})
\begin{eqnarray}\label{eq:discrete_fgw}
\begin{aligned}
&\sideset{}{_{\bm{T}\in\Pi(\bm{p}, \bm{q})}}\min \sum_{i,j=1}^{N} T_{ij}|f_i^a - f_j^b|^2+ \sum_{i,i',j,j'=1}^{N} T_{ij}T_{i'j'}|g_{ii'}^a - g_{jj'}^b|^2\\
=&\sideset{}{_{\bm{T}\in\Pi(\bm{p}, \bm{q})}}\min\langle \bm{D}_f,\bm{T}\rangle +\langle \bm{D}_g - 2\bm{G}_{a}\bm{T}\bm{G}_{b}^{\top},\bm{T}\rangle,
\end{aligned}
\end{eqnarray}
where $\bm{D}_f=[|f_i^a - f_j^b|^2]$,  $\bm{D}_g=\frac{1}{N}(\bm{G}_a\odot\bm{G}_a+\bm{G}_b\odot\bm{G}_b)$, and $\odot$ is the Hadamard product. 
This problem can be solved by the proximal gradient method in~\cite{xu2019gromov}.
\begin{algorithm}[h]
\small{
    \caption{$\min_{\bm{T}\in\Pi(\bm{p}, \bm{q})}\langle \bm{D}_f,\bm{T}\rangle +\langle \bm{D}_g - 2\bm{G}_{a}\bm{T}\bm{G}_{b}^{\top},\bm{T}\rangle$}	
    \label{alg:proximal}
	\begin{algorithmic}[1]
	    \STATE Initialize $\bm{T}^{(0)}=\bm{p}\bm{q}^{\top}$, $\bm{a}=\bm{p}$
		\STATE \textbf{for} $j=0,...,J-1$ 
		\STATE \quad $\bm{C}=\exp\left(-\frac{1}{\alpha}(\bm{D}_f + \bm{D}_g-2\bm{G}_a\bm{T}^{(j)}\bm{G}_b^{\top})\right)\odot \bm{T}^{(j)}$.
		\STATE \quad Sinkhorn iteration: $\bm{b}=\frac{\bm{q}}{\bm{C}^{\top}\bm{a}}$,  $\bm{a} = \frac{\bm{p}}{\bm{C}\bm{b}}$,
		\STATE \quad $\bm{T}^{(j+1)} = \text{diag}(\bm{a})\bm{C}\text{diag}(\bm{b})$.
		\STATE \textbf{Return} $\bm{T}^{(J)}$
	\end{algorithmic}
}
\end{algorithm}

\subsection{The $d_{ot}(\widehat{\mathcal{N}},\mathcal{N})$ of baselines}
The baselines also calculate $d_{ot}(\widehat{\mathcal{N}},\mathcal{N})$ by (\ref{eq:ot_tpp2}).
However, because the event types of their generated sequences perfectly correspond to those of the testing sequences, they can calculate the distance between each pair of sequences as $d(\hat{\bm{N}}_k, \bm{N}_l)=\frac{1}{|\mathcal{V}|}\sum_{v\in\mathcal{V}}d(\widehat{N}_{v}^k, N_{v}^l)$ rather than using (\ref{eq:ot_cp}).

\end{document}